\theoremstyle{plain}
\newtheorem{theorem}{Theorem}[section]
\newtheorem{proposition}[theorem]{Proposition}
\newtheorem{lemma}[theorem]{Lemma}
\theoremstyle{definition}
\newtheorem{assumption}[theorem]{Assumption}
\theoremstyle{remark}
\newtheorem{remark}[theorem]{Remark}
\title{From Conformal Predictions to Confidence Regions}
\author{%
  Charles Guille-Escuret \\
  Mila, Apple
  \And
  Eugene Ndiaye \\
  Apple \\
}
\begin{document}

\maketitle

\begin{abstract}
Conformal prediction methodologies have significantly advanced the quantification of uncertainties in predictive models. Yet, the construction of confidence regions for model parameters presents a notable challenge, often necessitating stringent assumptions regarding data distribution or merely providing asymptotic guarantees. We introduce a novel approach termed \texttt{CCR}, which employs a combination of conformal prediction intervals for the model outputs to establish confidence regions for model parameters. We present coverage guarantees under minimal assumptions on noise and that is valid in finite sample regime. Our approach is applicable to both split conformal predictions and black-box methodologies including full or cross-conformal approaches. In the specific case of linear models, the derived confidence region manifests as the feasible set of a Mixed-Integer Linear Program (MILP), facilitating the deduction of confidence intervals for individual parameters and enabling robust optimization. We empirically compare \texttt{CCR} to recent advancements in challenging settings such as with heteroskedastic and non-Gaussian noise.
\end{abstract}

\section{Introduction}

To ensure the safe deployment of machine learning technologies, it is essential to have reliable methods for quantifying uncertainties when making data-driven predictions.  Bayesian methods \citep{bernardo2009bayesian}, for example, rely on posterior distribution to model these uncertainties. The process begins with a prior distribution that represents prior knowledge before data are observed. Bayes' rule is then used to obtain a data-dependent probability called the posterior distribution, which represents the updated belief conditional on the data. The resulting distribution is commonly used to measure the likelihood of a specific outcome. However, a good choice of the prior is required in order to have reliable uncertainty estimates, which is difficult without stronger assumptions; see \citep{castillo2024bayesian}.\looseness=-1

Another common approach is to analyze the distribution of prediction errors and use it to create an interval by setting a threshold for the distance between the target and the prediction at a suitable quantile level. However, it is challenging to obtain a closed-form distribution for most prediction algorithms. Standard guarantees are achieved by limiting the analysis to situations where prediction errors can be modelled using a normal distribution e.g when applying the maximum likelihood principle \citep{van2000asymptotic}. Similarly, non-parametric approach \citep{tsybakov2009nonparametric} or Bootstrap confidence sets are only valid in the asymptotic regime or under strong regularity assumptions on the distribution; see \citep[section 3.5]{wasserman2006all} to estimate convergence rates. Overall, the current guarantees require strong regularity assumptions on the underlying distribution of the data or are only valid asymptotically.
Our objective is to depart from these classical regularity assumptions and construct confidence regions in the parameter space that have a probabilistic guarantee of containing the true parameters. In the real world, data is limited and the noise can take unknown, heteroskedastic functional forms. Our confidence regions should possess:

\qquad \emph{finite-sample validity under minimal assumptions on the data distribution.} \looseness=-1

Conformal Prediction (CP) methods \citep{Shafer_Vovk08}, provide a set of plausible values for the output $Y$ with a given input $X$. 
Unlike preceding methodologies, its uncertainty set is valid with finite sample sizes and under the sole assumption that one has access to similarly sampled data whose joint distribution is permutation invariant \textit{e.g.}, iid.
Nevertheless, CP offers limited to no insights into the underlying data generation process. Specifically, if $(X, Y)$ is sampled from a parameterized distribution $\mathbb{P}_{\theta_\star}$,  how can we obtain a confidence set for the true parameter $\theta_\star$ under similarly mild assumptions? \looseness=-1

\paragraph{Background}
We consider an explicit relationship between the input and output data
\begin{equation}
\label{eq:model}
Y= f_{\theta_\star}(X) + \xi,
\end{equation}

where the input $X$ takes values in $\mathbb{R}^d$, while the target (or output) $Y$ and the noise $\xi$ take values in $\mathbb{R}$. We denote $\theta_\star$ the ground truth parameter of the model.
Conformal prediction builds uncertainty sets for the noisy observation $Y$ for any tolerance level, i.e. CP builds a set-valued mapping $\Gamma$ such that
\begin{equation}
\label{eq:conformal_cov_base}
\mathbb{P}\big(Y \in \Gamma(X)\big) \geq 1 - \alpha, \quad \alpha \in (0, 1)
\end{equation}
where the probability is taken over $(X, Y)$, and the observed batch $\mathcal{D}_o = \{X_{i}^{obs}, Y_{i}^{obs}\}$ which is used to construct $\Gamma$; it is typically a training sample of the data.

We additionally consider a separate \emph{unlabelled} sample $\mathcal{D}$ of size $n$ of the data
$\mathcal{D}=\{X_1,\ldots,X_n\}.$
Our goal is now to construct a confidence region $\Theta \subset \mathbb{R}^d$ over the parameter space such that
\begin{equation}
\label{eq:target-cov}
    \mathbb{P}_{\mathcal{D}_0,\mathcal{D}}(\theta_\star\in\Theta)\geq 1-\beta, \quad \beta\in(0,1)
\end{equation}
In practice, the distribution of the noise $\xi$ is \textit{unknown}; and assumptions about it is a modeling question, which inherently varies based on the analyst's methodology and prior knowledge about the underlying phenomena being studied.
Popular settings where the noise can be supposed gaussian with identifiable (unique) ground-truth $\theta_\star$ is well studied. Standard solutions based on central limit theorem offers accurate confidence sets valid both in asymptotic and finite sample regime. Recent work \citep{wasserman2020universal} propose a significant extension by assuming continuous noise and availability of a likelihood function i.e. knowing explicitly the shape of the distribution.
Significant research efforts have been dedicated to modeling phenomena with heavy-tailed characteristics, wherein the noise lacks finite moments, rendering classical methods reliant on the central limit theorem inapplicable. Examples include the \textit{Voigt profile}, which results from convolving a Cauchy distribution with a Gaussian distribution, with applications in physics \citep{balzar1993x, chen2015optimum, pagnini2010evolution, farsad2015stable}. Similarly, the \textit{Slash distribution}, \citep{rogers1972understanding, alcantara2017slash}, is instrumental for simulating heavy-tailed phenomena. In summary, these methods are designed to model uncertainty yet are fundamentally dependent on properties of the noise that are themselves uncertain. This apparent contradiction calls into question their validity in real-world settings, and motivates our proposed methodology.



\paragraph{Contributions}

We summarize our contributions as follows:

\begin{enumerate}
\item We extend the validity of conformal prediction methodologies to build uncertainty sets for the noise-free output $f_{\theta_\star}(X)$.
\item We introduce \texttt{Conformal Confidence Regions} (\texttt{CCR}), a method to aggregate the CP intervals of multiple \emph{unlabelled} inputs $X_1, \ldots, X_n$ to construct a confidence region for the ground-truth parameter $\theta_\star$.
\item With minimal assumptions on the noise, we provide finite-sample valid coverage guarantees for \texttt{CCR} when the CP intervals are constructed by a black-box methodology, and improved guarantees when the CP intervals originate from the usual split CP method.
\item We propose an approach for regression with abstention leveraging our uncertainty estimation, and we provide finite sample valid probability bounds.
\item We validate \texttt{CCR} empirically by constructing confidence regions for the parameter of a linear distribution $f_{\theta_\star}(X)=\theta_\star^\top X$ under challenging noise distributions, and compare to recent methodologies.

\end{enumerate}

Theoretical proofs of our results are all provided in \Cref{appendix:proof}.

\section{Confidence Set for Noise-free Outputs}

The building blocks of \texttt{CCR} are prediction intervals with finite-sample coverage over the noise-free outputs $f_{\theta_\star}(X)$. In contrast, CP only provides coverage guarantees over the noisy output $Y$. We show that the prediction intervals of CP also have coverage guarantees for the noise-free outputs.

\begin{assumption}\label{assum:gamma_is_interval}
The prediction set 
$\Gamma(X) = [A(X), B(X)]$ is an interval satisfying \Cref{eq:conformal_cov_base} for some $\alpha$.
For conformal prediction sets that aren't intervals, we consider their convex hull instead.
\end{assumption}
We define the noise dependent quantity that describes class of distribution we can handle:
\begin{align*}
d_\xi(x) &= \min\bigg(\mathbb{P}\left(\xi\geq 0\mid X=x\right), \,\mathbb{P}\left(\xi\leq 0 \mid X=x\right) \bigg) \,.
\end{align*}
\begin{assumption}\label{assum:noise}
We suppose $\displaystyle b \coloneqq \inf_{x\in \text{dom}(X)} d_\xi(x) >0$.
\end{assumption}
Symmetric noise (e.g. Gaussian) as in \citep{Csaji_2015}, implies $b=0.5$. It is sufficient for $\xi$ to have a median of $0$ everywhere to ensure $b=0.5$. Our approach, which provides coverage guarantees based on $b$, offers greater flexibility regarding data distribution compared to previous methods. Intuitively, $b$ reflects the tolerance for model misspecification, with guarantees deteriorating as $b$ decreases. An issue with classical approaches arises when the noise distribution's mean does not exist. Distributions lacking a mean still possess quantiles and our primary assumption is a bound on the quantile (e.g., median $0$), without assuming zero mean noise. Examples as the Cauchy distribution, with infinite variance, makes the central limit theorem inapplicable, yet our methods remain valid.
\begin{restatable}{proposition}{noisefreecov}
\label{prop:noise-free-cov}
Under the model in \Cref{eq:model}, \Cref{assum:gamma_is_interval} and \Cref{assum:noise}, it holds
\begin{equation}
\label{eq:noise-free-cov}
    \mathbb{P}_{\mathcal{D}_0,X}\left(f_{\theta_\star}(X) \in \Gamma(X) \right)\geq 1-\frac{\alpha}{b} \, .
\end{equation}
\end{restatable}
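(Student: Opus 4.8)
The plan is to reduce the statement to a pointwise fact about the noise after freezing everything except $\xi$. Write $E = \{f_{\theta_\star}(X)\in\Gamma(X)\}$ and $F = \{Y\in\Gamma(X)\}$; \Cref{eq:conformal_cov_base} gives $\mathbb{P}(F^c)\le\alpha$, so it suffices to prove $b\,\mathbb{P}(E^c)\le\mathbb{P}(F^c)$. I would condition on the pair $(\mathcal{D}_0,X)$, which fixes the endpoints $A(X),B(X)$ (measurable functions of $\mathcal{D}_0$ and $X$) and the value $f_{\theta_\star}(X)$, leaving $\xi$ as the only remaining randomness, distributed according to the conditional law of $\xi$ given $X$. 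In particular $\mathbbm{1}_{E^c}$ is $(\mathcal{D}_0,X)$-measurable, which is what will let the tower property close the argument.

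Next I would translate both events through the noise. With $u = A(X)-f_{\theta_\star}(X)$ and $v = B(X)-f_{\theta_\star}(X)$, so that $u\le v$ by \Cref{assum:gamma_is_interval}, one has $F=\{\xi\in[u,v]\}$ and $E=\{0\in[u,v]\}$. On $E^c$ the interval $[u,v]$ misses $0$, hence either $v<0$ or $u>0$. If $v<0$ then $\{\xi\ge 0\}\subseteq\{\xi>v\}\subseteq F^c$; if $u>0$ then $\{\xi\le 0\}\subseteq\{\xi<u\}\subseteq F^c$. In either case, by the definition of $d_\xi$ and \Cref{assum:noise},
\[
\mathbb{P}\big(F^c\mid \mathcal{D}_0,X\big)\;\ge\;\min\big(\mathbb{P}(\xi\ge 0\mid X),\,\mathbb{P}(\xi\le 0\mid X)\big)\;=\;d_\xi(X)\;\ge\;b
\]
on the event $E^c$; equivalently $b\,\mathbbm{1}_{E^c}\le\mathbb{P}(F^c\mid\mathcal{D}_0,X)$ almost surely.

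Finally, taking expectations and using that $\mathbbm{1}_{E^c}$ is $(\mathcal{D}_0,X)$-measurable,
\[
b\,\mathbb{P}(E^c)=\mathbb{E}\big[b\,\mathbbm{1}_{E^c}\big]\le\mathbb{E}\big[\mathbb{P}(F^c\mid\mathcal{D}_0,X)\big]=\mathbb{P}(F^c)\le\alpha,
\]
so $\mathbb{P}(E)\ge 1-\alpha/b$, which is \Cref{eq:noise-free-cov}. The main obstacle is the conditioning step: one must be sure that the exchangeability/independence structure underlying conformal prediction allows treating the test noise $\xi$ as drawn from its conditional-given-$X$ law once $\mathcal{D}_0$ and $X$ are fixed; granting that, the remainder is elementary interval bookkeeping plus the tower property. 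The statement is vacuous when $\alpha\ge b$, and trivial when $\mathbb{P}(E^c)=0$, so neither edge case causes difficulty.
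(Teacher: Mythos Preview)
Your proof is correct and follows essentially the same idea as the paper's: both use that when $f_{\theta_\star}(X)$ lies outside the interval $[A(X),B(X)]$, the noise must have a specific sign for $Y$ to land inside, and by \Cref{assum:noise} this happens with conditional probability at most $1-b$. The only cosmetic difference is framing: the paper decomposes $\mathbb{P}(Y\in\Gamma(X))$ via the law of total probability over the three positions of $f_{\theta_\star}(X)$ relative to $[A(X),B(X)]$ and bounds each conditional term, whereas you work with complements and condition pointwise on $(\mathcal{D}_0,X)$ to get $b\,\mathbbm{1}_{E^c}\le\mathbb{P}(F^c\mid\mathcal{D}_0,X)$ before applying the tower property; the two are algebraically equivalent, and your version is arguably cleaner about the measurability structure.
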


\Cref{prop:noise-free-cov} extends conformal guarantees to noise-free outputs, albeit with reduced coverage. Surprisingly, noise-free coverage can be lower than with noise due to adversarial scenarios where $\Gamma(X)$ often narrowly misses $f_{\theta_\star}(X)$ (we provide an explicit example in \Cref{appendix:counterexample}). Empirical evidence (\Cref{table:cov-noisy}, \Cref{table:cov-noisy-ext}) consistently shows $\Gamma(X)$ covers $f_{\theta_\star}(X)$ more frequently. Across several CP methods, training sizes, noise distributions, and thousands of trials, there is not a single trial for which the noise-free output $f_{\theta_\star}(X)$ obtains lower coverage than $Y$. Similar observations from \cite{feldman2023conformal} is that CP with noisy labels attain conservative risk over the clean ground truth labels. This motivates \Cref{assumption:strong}, a stronger but realistic alternative to \Cref{prop:noise-free-cov}.


\begin{assumption} 

\label{assumption:strong}
The noise-free outputs have the same coverage guarantee as the noisy outputs,
\begin{equation}
    \mathbb{P}_{\mathcal{D}_0,X}(f_{\theta_\star}(X)\in\Gamma(X))\geq 1-\alpha \,.
\end{equation}
\end{assumption}
To simplify, we denote $\alpha'=\frac{\alpha}{b}$ in the absence of \Cref{assumption:strong} and $\alpha'=\alpha$ under \Cref{assumption:strong}.

\begin{table}[ht]
\begin{tabular}{ll}
\hspace{-1cm}
\begin{minipage}[t]{0.5\textwidth}
        \centering
        \vspace{-6cm}
        \captionof{table}{Noise Types}
        \label{tab:noise-types}
        \resizebox{\textwidth}{!}{
        \begin{tabular}{@{} lccc @{}}
        \toprule
        Name & Notation & Distribution \\
        \midrule
        additive Gaussian & $\xi_{aG\,}$ & $\mathcal{N}(0,1)$ \\
        multiplicative Gaussian & $\xi_{mG}$ & $\mathcal{N}(0,\theta_\star^\top X)$  \\
        outliers & $\xi_{O\phantom{G}}$ & $\mathcal{P}_O$  \\
        discrete & $\xi_{D\phantom{G}}$ & $\mathcal{P}_D$  \\
        \bottomrule
        \end{tabular}
        }
        \caption*{With $\mathcal{P}_O = 0.9 \times \mathcal{N}(0,10) + 0.1 \times \mathcal{N}(0,0.05)$ and $\mathcal{P}_D(\xi=-0.5) = \mathcal{P}_D(\xi=0.5) = 0.5.$}
        
        \vspace{0.5cm} 

        \captionof{table}{Coverage comparison of split CP for linear model, averaged over $1000$ trials of $1000$ test samples each in dimension $d=50$. The last column indicates how many trials had better coverage for $Y$ than for $f_{\theta_\star}(X)$. An extended version is presented in \Cref{table:cov-noisy-ext}.}
        \label{table:cov-noisy}
        \resizebox{\textwidth}{!}{
        \begin{tabular}{@{} lcccc @{}}
        \toprule
        \multirow{2}{*}{Train size} & \multirow{2}{*}{Noise} & \multirow{2}{*}{$f_{\theta_\star}(X)$} & \multirow{2}{*}{$Y$} & \multirow{2}{*}{n\_trials} \\
        & & & & \\
        \midrule
        \multirow{2}{*}{40} & Gaussian & $91.1\%$ & $91.0\%$ & 0/1000 \\
        & Outliers & $93.4\%$ & $91.0\%$ & 0/1000 \\
        \multirow{2}{*}{100} & Gaussian & $96.8\%$ & $90.9\%$ & 0/1000 \\
        & Outliers & $94.0\%$ & $91.4\%$ & 0/1000\\
        \bottomrule
        \end{tabular}
        }
    \end{minipage} & 
    \begin{minipage}[t]{0.6\textwidth}
        \centering
        \includegraphics[width=\textwidth]{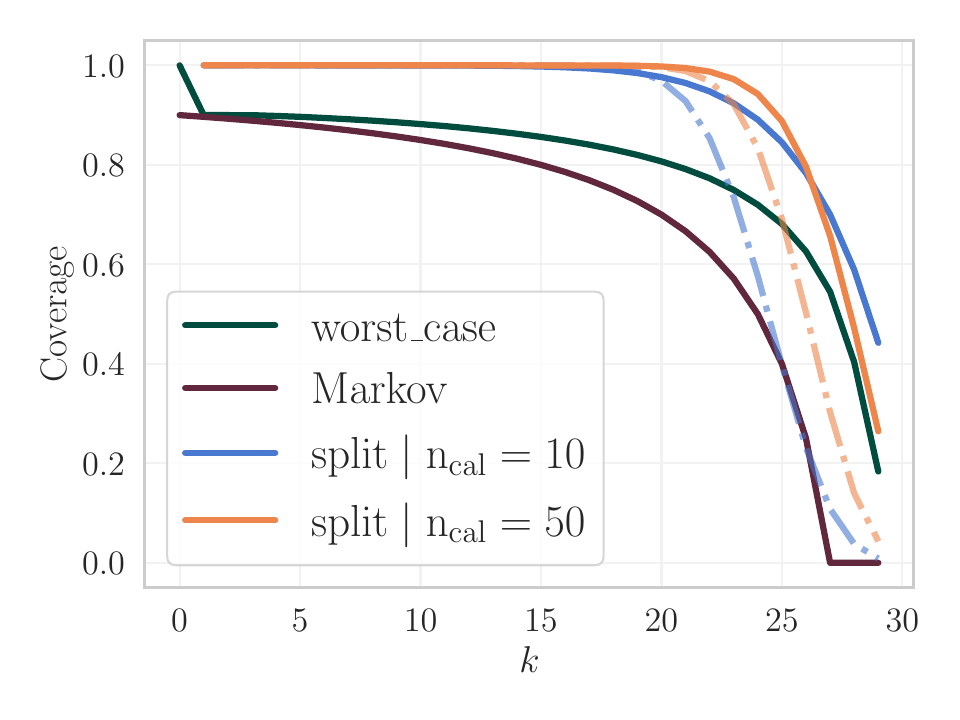}
        \vspace{0.1cm}
        \captionof{figure}{Guaranteed coverage of $\Theta_k$ as a function of $k$ from the lower bounds in \Cref{sec:confidence_sets}, under \Cref{assumption:strong}, with $n=30$. The dashed line corresponds to the PAC bounds with $\delta=0.1$. In the case of Markov, we can sample $k$ uniformly in $[k_{\rm{Markov}},n]$ without loss of guarantee so we are only plotting the coverage against the worst-case $k$.}
        \label{fig:k-cov}
    \end{minipage}
\end{tabular}
\end{table}

\section{Confidence Set for the Model Parameter}
\label{sec:confidence_sets}

Having established a confidence set for noise-free outputs, extending it to model parameters involves selecting all the parameters that result in the same confidence interval. Intuitively, this allows us to propagate uncertainty set from the outputs to the underlying model parameters. We define
$$
\Theta(X) = \{\theta \in \mathbb{R}^d : f_{\theta}(X) \in \Gamma(X)\}.
$$

While having finite sample validity under weak assumption on the noise, the set $\Theta(X)$ might be large. For linear models, $f_{\theta}(X) = \theta^\top X $, we obtain
$\Theta(X) = \{\theta \in \mathbb{R}^d : A(X) \leq \theta^\top X \leq B(X)\},$ 
which is unbounded for dimension larger than $1$.
A potential strategy would be to leverage each input $X_i \in \mathcal{D}$ that yields a valid confidence set $\Theta(X_{i})$. The most straightforward method for combining confidence sets from different test points is to intersect them. As such, we define
\[
\Theta_n = \bigcap_{i=1}^{n} \Theta(X_{i}).
\]
Without additional structural assumptions, one can employ the Bonferroni inequality to control the probability of the intersection:
$
\mathbb{P}_{\mathcal{D}_0,\mathcal{D}}(\theta_\star \in \Theta_n) \geq 1 - \sum_{i=1}^{n} \mathbb{P}(\theta_\star \notin \Theta(X_i))$. This results in a $(1 - n \alpha')$-confidence set and thus the coverage decreases rapidly with the number of points in $\mathcal{D}$.

\paragraph{Aggregation by Voting}

To aggregate the $\Theta(X_i)$ more efficiently, we draw inspiration from \citep{cherubin2019majority, gasparin2024merging} and \citep{guille2024finite}, and define
\begin{equation}
    \Theta_k:=\left\{\theta\in\mathbb{R}^d: \sum_{i=1}^{n} \phi_i(\theta) \geq k\right\},
\end{equation}
for some $k \in \{1,\ldots n\}$ to be determined later and
 $\phi_i(\theta):=\mathbbm{1}_{f_\theta(X_i) \in\Gamma(X_i)}.$
This set collects parameters that are in some fraction of individual confidence set. 
One notes that when $k$ is equal to $n$, we require $\theta_\star$ to be covered by all sets $\Theta(X_i)$'s as in the previous intersection.
To guarantee the coverage of such region, we need to lower bound the probability that at least $k$ sets contain $\theta_\star$
\begin{equation}
    \label{eq:target}
\mathbb{P}_{\mathcal{D}_0,\mathcal{D}}(\theta_\star\in\Theta_k)  = 
\mathbb{P}_{\mathcal{D}_0,\mathcal{D}}\bigg(\sum_{i=1}^{n}\phi_i(\theta_\star)\geq k\bigg).
\end{equation}
While each $\phi_i(\theta_\star)$ has an expectation lower bounded by \Cref{eq:noise-free-cov}, it is crucial to note that the $\phi_i$ are \emph{not} independent preventing the use of the binomial law. To obtain lower bounds and then correctly select $k$, we propose different approaches, and illustrate their resulting guarantees in \Cref{fig:k-cov}.

\subsection{Fully Black-Box}

In this section we make no assumptions on the CP method and thus on the structure of $\Gamma$. We must thus consider any possible (e.g., worst-case) dependencies between the unknown indicators $\phi_i(\theta_\star)$.

\subsubsection{Randomized Markov's Inequality}

We denote $\phi_{i}^{c} = 1 - \phi_i$ and apply Markov's inequality
\begin{align*}
\mathbb{P}(\theta_\star  \notin \Theta_k) &= \mathbb{P}\left(\sum_{i=1}^{n} \phi_{i}^{c}(\theta_\star) \geq n-k+1 \right) 
\leq \frac{1}{n-k+1} \mathbb{E}\left(\sum_{i=1}^{n} \phi_{i}^{c}(\theta_\star)\right).
%
\end{align*}

Furthermore, \citet{gasparin2024merging} improves the above bound by using the additively randomized Markov's inequality from \citet{ramdas2023randomized}, which stipulates that given $U\sim \mathrm{Unif}(0,a)$, and $Z$ a non-negative random variable, $\mathbb{P}_{Z,U}(Z\geq a-U)\leq {\mathbb{E}[Z]}/{a}.$
Applying it to our setting with $Z=\sum_{i=1}^{n} \phi_{i}^{c}(\theta_\star)$ and $a=n-k+1$, we obtain the following coverage bound.

\begin{proposition}
In the above settings, $\forall k\in[n]$, it holds
$$
\mathbb{P}(\theta_\star  \in \Theta_{k+\lfloor U\rfloor}) \geq 1 - \frac{n}{n-k+1}\alpha',
$$
where $\mathbb{P} \equiv \mathbb{P}_{\mathcal{D}_0,\mathcal{D},U}$ is the joint distribution of $\mathcal{D}_0$, $\mathcal{D}$ and $U\sim \mathrm{Unif}(0,n-k+1)$.
\end{proposition}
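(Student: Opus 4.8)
The plan is to apply the additively randomized Markov inequality quoted in the excerpt directly to the nonnegative, integer-valued random variable $Z:=\sum_{i=1}^{n}\phi_i^{c}(\theta_\star)$, with constant $a:=n-k+1$ and $U\sim\mathrm{Unif}(0,a)$. First I would bound $\mathbb{E}[Z]$: by linearity of expectation $\mathbb{E}[Z]=\sum_{i=1}^{n}\mathbb{P}_{\mathcal{D}_0,X_i}\big(f_{\theta_\star}(X_i)\notin\Gamma(X_i)\big)$, and since each $X_i$ has the same marginal as the generic $X$, every term is at most $\alpha'$ — this is exactly \Cref{prop:noise-free-cov} when $\alpha'=\alpha/b$, and \Cref{assumption:strong} when $\alpha'=\alpha$. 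Hence $\mathbb{E}[Z]\le n\alpha'$, and plugging into $\mathbb{P}_{Z,U}(Z\ge a-U)\le\mathbb{E}[Z]/a$ yields $\mathbb{P}(Z\ge n-k+1-U)\le \frac{n}{n-k+1}\alpha'$.

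The second step is to identify the complementary event $\{Z<n-k+1-U\}$ with $\{\theta_\star\in\Theta_{k+\lfloor U\rfloor}\}$. Because $Z$ is integer-valued and $U\notin\mathbb{Z}$ almost surely, the inequality $Z<n-k+1-U$ is equivalent to $Z\le\lfloor n-k+1-U\rfloor=n-k-\lfloor U\rfloor$, hence to $\sum_{i=1}^{n}\phi_i(\theta_\star)=n-Z\ge k+\lfloor U\rfloor$, which is precisely $\theta_\star\in\Theta_{k+\lfloor U\rfloor}$ by definition of $\Theta_k$. I would also record that $k+\lfloor U\rfloor\in\{1,\dots,n\}$: since $0\le U<n-k+1$ we get $0\le\lfloor U\rfloor\le n-k$, so $\Theta_{k+\lfloor U\rfloor}$ is a well-defined voting set. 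Combining the two steps gives $\mathbb{P}(\theta_\star\in\Theta_{k+\lfloor U\rfloor})=1-\mathbb{P}(Z\ge n-k+1-U)\ge 1-\frac{n}{n-k+1}\alpha'$, as claimed.

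I expect the only delicate point to be the floor bookkeeping in the equivalence of events, specifically justifying $\lfloor n-k+1-U\rfloor=n-k-\lfloor U\rfloor$; this uses that $U$ is almost surely non-integer so that $\lceil U\rceil=\lfloor U\rfloor+1$, and the measure-zero event $\{U\in\mathbb{Z}\}$ can be harmlessly discarded. The remaining ingredients — linearity of expectation, the cited randomized Markov inequality, and the definition of $\Theta_k$ — are entirely mechanical, so the argument is short once the event identification is stated carefully.
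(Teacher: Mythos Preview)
Your argument is correct and follows exactly the route sketched in the paper: apply the additively randomized Markov inequality to $Z=\sum_i\phi_i^{c}(\theta_\star)$ with $a=n-k+1$, bound $\mathbb{E}[Z]\le n\alpha'$ via \Cref{prop:noise-free-cov} (or \Cref{assumption:strong}), and then read off the complementary event. The paper leaves the floor bookkeeping implicit, whereas you spell out carefully why $\{Z<n-k+1-U\}$ coincides almost surely with $\{\theta_\star\in\Theta_{k+\lfloor U\rfloor}\}$; that extra detail is sound and is the only place any care is needed.
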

This allows us to tighten our confidence region by adding a random number of constraints $\lfloor U \rfloor$, without any loss in our coverage guarantee. To guarantee a coverage of $1-\beta$ for $\Theta_k$, one chooses
$$k_{\rm{Markov}}=\left\lfloor n+1-\frac{n}{\beta}\alpha'\right\rfloor + \lfloor U \rfloor.$$
Note that the coverage is only guaranteed marginally (ie on average) on $U$.

\subsubsection{Worst-Case Dependency}
\label{sec:worst-case}

Fortunately, the worst-case dependency between the $\phi_i(\theta_\star)$ may not be too detrimental to the coverage of $\Theta_k$, as long as they are independent conditionally on $\mathcal{D}_0$. Within this section only, we make the assumption that the inputs in $\mathcal{D}$ are independent.
\Cref{prop:indep_is_worst_case} shows that when $k=n$, the worst-case dependency of the $\phi_i(\theta_\star)$ (in terms of coverage guarantee) is actually their independence. 


\begin{restatable}{proposition}{indepisworstcase}
\label{prop:indep_is_worst_case}
 Suppose the inputs in $\mathcal{D}$ are independent, it holds 
    $$\mathbb{P}_{\mathcal{D}_0,\mathcal{D}}\bigg(\theta_\star \in \Theta_n\bigg)\geq \left(1-\alpha'\right)^{n} .$$
\end{restatable}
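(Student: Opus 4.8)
The plan is to reduce the statement to a purely distributional claim about a single, conditioning-fixed layer of randomness. Condition on the training batch $\mathcal{D}_0$; this fixes the map $\Gamma$, hence fixes the event $E(x) := \{f_{\theta_\star}(x)\in\Gamma(x)\}$ as a deterministic subset of the input space. Write $p := \mathbb{P}_X(E(X)\mid \mathcal{D}_0)$, which by \Cref{prop:noise-free-cov} (more precisely, its conditional-on-$\mathcal{D}_0$ version, which is what the proof of that proposition actually yields, since the noise-averaging argument is carried out pointwise in $x$) satisfies $p \ge 1-\alpha'$ almost surely — here $\alpha'=\alpha/b$ without \Cref{assumption:strong} and $\alpha'=\alpha$ with it. Since the $X_i\in\mathcal{D}$ are i.i.d.\ and independent of $\mathcal{D}_0$, the indicators $\phi_i(\theta_\star)=\mathbbm{1}_{X_i\in E(\cdot)}$ are, \emph{conditionally on} $\mathcal{D}_0$, i.i.d.\ Bernoulli$(p)$ random variables. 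The event $\{\theta_\star\in\Theta_n\}$ is exactly $\{\sum_i\phi_i(\theta_\star)=n\}=\bigcap_i\{X_i\in E(\cdot)\}$, so conditionally on $\mathcal{D}_0$ its probability is $p^n \ge (1-\alpha')^n$.

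The second step is to remove the conditioning. Take expectation over $\mathcal{D}_0$:
\begin{equation*}
\mathbb{P}_{\mathcal{D}_0,\mathcal{D}}(\theta_\star\in\Theta_n)
= \mathbb{E}_{\mathcal{D}_0}\big[\, p^n \,\big]
\ge \mathbb{E}_{\mathcal{D}_0}\big[(1-\alpha')^n\big]
= (1-\alpha')^n,
\end{equation*}
where the inequality is just $p\ge 1-\alpha'$ a.s.\ raised to the $n$-th power (monotone since $0\le 1-\alpha'\le p\le 1$), and the last equality because the bound is a constant. That finishes it.

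The one genuine subtlety — and the step I'd flag as the main obstacle — is justifying that \Cref{prop:noise-free-cov} holds \emph{conditionally on} $\mathcal{D}_0$, i.e.\ that $\mathbb{P}_X(f_{\theta_\star}(X)\in\Gamma(X)\mid\mathcal{D}_0)\ge 1-\alpha'$ almost surely, rather than merely in its marginalized form $\mathbb{P}_{\mathcal{D}_0,X}(\cdots)\ge 1-\alpha'$. If only the marginal version is available, one cannot factor the intersection as $\mathbb{E}_{\mathcal{D}_0}[p^n]$ and conclude, because $\mathbb{E}[p^n]\ne(\mathbb{E}[p])^n$ in general and Jensen points the wrong way only if we want an \emph{upper} bound. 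The resolution is that the argument behind \Cref{prop:noise-free-cov} operates entirely on the conditional noise law $\mathbb{P}(\xi\,|\,X=x)$ and the fixed interval $\Gamma(x)$ — it never averages over $\mathcal{D}_0$ except to invoke \Cref{eq:conformal_cov_base} — so in fact the stronger statement "for $\mathcal{D}_0$-almost every realization, $\mathbb{P}_X(f_{\theta_\star}(X)\in\Gamma(X))\ge 1-\alpha'$" is what is really proved, provided the underlying conformal guarantee \Cref{eq:conformal_cov_base} is itself available conditionally on $\mathcal{D}_0$ (which it is for split CP on the calibration-conditional version, and one should state the needed conditional form explicitly). I would make this conditioning precise at the outset, perhaps as a short lemma, and then the rest is the two-line computation above. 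Everything else — i.i.d.\ of the $X_i$, the Bernoulli identification, the tower property — is routine.
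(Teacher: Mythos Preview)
Your argument has a genuine gap at exactly the place you flagged as the ``main obstacle,'' and your proposed resolution does not work. You need $p=\mathbb{P}_X(f_{\theta_\star}(X)\in\Gamma(X)\mid\mathcal{D}_0)\ge 1-\alpha'$ \emph{almost surely in $\mathcal{D}_0$}, but this is false in general: the conformal coverage guarantee \Cref{eq:conformal_cov_base} is only \emph{marginal} over $\mathcal{D}_0$. Concretely, for split CP the conditional noisy coverage $Q(\mathcal{D}_0)$ follows a $\mathrm{Beta}(i_\alpha,j_\alpha)$ law (see \Cref{eq:split-dep}), so it falls below $1-\alpha$ with positive probability; the conditional noise-free coverage inherits this randomness via \Cref{lm:conditional_cov}. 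Hence $p\ge 1-\alpha'$ a.s.\ is simply not available, and the step $\mathbb{E}_{\mathcal{D}_0}[p^n]\ge \mathbb{E}_{\mathcal{D}_0}[(1-\alpha')^n]$ cannot be justified that way.

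The fix is precisely the Jensen step you dismissed. You wrote that with only the marginal bound ``one cannot \ldots\ conclude, because $\mathbb{E}[p^n]\ne(\mathbb{E}[p])^n$ in general and Jensen points the wrong way,'' but Jensen points exactly the \emph{right} way here: $x\mapsto x^n$ is convex on $[0,1]$, so $\mathbb{E}_{\mathcal{D}_0}[p^n]\ge(\mathbb{E}_{\mathcal{D}_0}[p])^n\ge(1-\alpha')^n$, using only the marginal guarantee $\mathbb{E}_{\mathcal{D}_0}[p]\ge 1-\alpha'$ from \Cref{prop:noise-free-cov}. This is the paper's proof: condition on $\mathcal{D}_0$, use conditional independence to write the probability as $\mathbb{E}_{\mathcal{D}_0}[p^n]$, then apply Jensen and the marginal bound. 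No almost-sure conditional coverage is needed.
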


While this bound will generally be too small to be used in practice, it motivates us to find the worst-case distribution of $\phi_i(\theta_\star)$ and use the resulting coverage as a lower bound.
Let us denote the coverage for $f_{\theta_\star}(X)$ conditional on $\mathcal{D}_0$ by $Q_{\star}(\mathcal{D}_0)$ and the probability of a $\mathrm{Binomial}(n, p)$ being larger than $k$ by $F_k(p)$. More precisely, we define
\begin{align*}
   &Q_{\star}(\mathcal{D}_0):=\mathbb{P}_{X}(f_{\theta_\star}(X) \in \Gamma(X) \mid \mathcal{D}_0), &F_k(p) = \sum_{i=k}^{n} \binom{n}{i} p^i (1-p)^{n-i}. 
\end{align*}
Since the $X_i$ in $\mathcal{D}$ are assumed to be independent, the events $f_{\theta_\star}(X_i) \in \Gamma(X_i)$ are conditionally independent given $\mathcal{D}_0$. We can thus use the binomial law on their sum, to obtain for any $\mathcal{D}_0$:
%
\begin{align}
    \mathbb{P}_{\mathcal{D}_0, \mathcal{D}}[\theta_\star \in \Theta_k] = 
    \mathbb{E}_{\mathcal{D}_0}\bigg[ \mathbb{P}_{\mathcal{D}}\bigg(\sum_{i=1}^{n} \phi_i(\theta_\star) \geq k \mid \mathcal{D}_0\bigg) \bigg] =
    \mathbb{E}_{\mathcal{D}_0}\bigg[F_k(Q_{\star}(\mathcal{D}_0))\bigg].\label{eq:iterated_proba}
\end{align}
In order to get the coverage of $\Theta_k$, we would like to compute the expectation of the above term over $\mathcal{D}_0$. The difficulty is that in the black-box setting, the distribution of $Q_\star$ over $\mathcal{D}_0$ is unknown, making such expectation intractable. As a workaround, we propose to compute the infimum of this expectation over all valid distributions of $Q_\star$.
That is, we seek to solve
%
%
\begin{equation}
\label{eq:worst-case}
\begin{aligned}
\inf_{Q\in [0,1]^{\rm{dom}(\mathcal{D}_0)}} & \mathbb{E}_{\mathcal{D}_0}\bigg[F_k(Q(\mathcal{D}_0))\bigg]
\quad\textrm{ s.t. }\quad \mathbb{E}_{\mathcal{D}_0}\bigg[Q(\mathcal{D}_0)\bigg]\geq 1-\alpha' \, ,
\end{aligned}
\end{equation}
where the constraint is from \Cref{prop:noise-free-cov}. This infimum is taken over the infinite-dimensional variable $Q$ and is a-priori hard to solve. We make it tractable by reduction to a two-dimensional problem. \looseness=-1

\begin{restatable}{lemma}{lemmaworstcase}
\label{lemma-worst-case}
    There exists $A\subseteq \rm{dom}(\mathcal{D}_0)$, 
    $v \in [0,1-\alpha') \text{ and } u\in(1-\alpha',1],$
    s.t. the infimum of \Cref{eq:worst-case} is reached for 
$Q_{v,u,A}(\mathcal{D}_0) := v+(u-v)\mathbbm{1}_{\mathcal{D}_0\in A}$ and 
        $\mathbb{P}_{\mathcal{D}_0}(\mathcal{D}_0\in A) =\frac{1-\alpha'-v}{u-v}.$
\end{restatable}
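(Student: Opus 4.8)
The plan is to reduce the infinite-dimensional optimization in \Cref{eq:worst-case} to a finite-dimensional one in two stages. First I would argue that only the distribution of the scalar random variable $Q_\star = Q(\mathcal{D}_0) \in [0,1]$ matters: both the objective $\mathbb{E}_{\mathcal{D}_0}[F_k(Q(\mathcal{D}_0))]$ and the constraint $\mathbb{E}_{\mathcal{D}_0}[Q(\mathcal{D}_0)] \geq 1-\alpha'$ depend on $Q$ only through the pushforward law $\mu$ of $Q(\mathcal{D}_0)$ on $[0,1]$. So \Cref{eq:worst-case} is equivalent to $\inf_\mu \int_0^1 F_k(q)\, d\mu(q)$ subject to $\int_0^1 q\, d\mu(q) \geq 1-\alpha'$, an infimum over probability measures on $[0,1]$ with a single linear moment constraint. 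I would need to check (a measurable-selection / converse direction) that any feasible $\mu$ can be realized by some measurable $Q$: this is immediate since we may take $\mathcal{D}_0$ rich enough, or simply note the bound we prove is a lower bound on the true coverage regardless, which is all that is used downstream.

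Second, I would invoke the standard extreme-point structure of linear programs over probability measures with finitely many moment constraints: with one equality/inequality constraint plus the total-mass constraint, the infimum over $\mu$ is attained (or approached) at a measure supported on at most two points. This is a classical fact (e.g.\ via Bauer's maximum principle, or the Richter--Rogosinski theorem on the number of atoms needed to match $m$ moments). Write the optimal two-point measure as $\mu = p\,\delta_u + (1-p)\,\delta_v$. Then I would show the constraint must be tight at the optimum: if $\mathbb{E}_\mu[Q] > 1-\alpha'$, one can shift mass toward smaller values of $q$ to decrease $\int F_k\,d\mu$ (using that $F_k$ is nondecreasing in $q$), contradicting optimality; hence $p u + (1-p) v = 1-\alpha'$, which pins down $p = \tfrac{1-\alpha'-v}{u-v}$. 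Finally, one checks the ordering can be taken as $v < 1-\alpha' < u$: if both atoms were on the same side of $1-\alpha'$ the constraint would be violated or slack could be removed by collapsing to a point mass at $1-\alpha'$ (which, by convexity of... actually by monotonicity, does not increase the objective is not quite right — but since $F_k$ restricted near the relevant range lets us always split a single atom at $1-\alpha'$ into a two-atom configuration without increasing the objective when $F_k$ fails to be concave there, and otherwise the single atom is already of the claimed degenerate form with $v=u=1-\alpha'$). Setting $A = \{\mathcal{D}_0 : Q(\mathcal{D}_0) = u\}$ (the preimage of the upper atom) gives $Q_{v,u,A}(\mathcal{D}_0) = v + (u-v)\mathbbm{1}_{\mathcal{D}_0 \in A}$ with $\mathbb{P}_{\mathcal{D}_0}(\mathcal{D}_0 \in A) = p = \tfrac{1-\alpha'-v}{u-v}$, as claimed.

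The main obstacle I anticipate is the rigorous justification of the two-atom reduction together with attainment of the infimum: one must either quote the Richter--Rogosinski / Mulholland--Rogers bound on the support size of optimal moment-problem measures, or give a direct mass-transport argument showing any measure can be ``sparsified'' to two atoms without increasing $\int F_k\,d\mu$ while preserving $\int q\,d\mu$. The clean way is: fix the mean $m := \int q\,d\mu = 1-\alpha'$ (WLOG tight, by the monotonicity argument above), and observe that among all laws on $[0,1]$ with mean $m$, $\int F_k\,d\mu$ is minimized by a law supported on $\{0,1\}$-type extreme points of the mean-constrained simplex only if $F_k$ were concave — which it is \emph{not} globally — so instead the correct statement is that the minimizing law is an extreme point of $\{\mu : \int q\,d\mu = m\}$, and extreme points of a set cut out by one linear constraint on $\mathcal{P}([0,1])$ have at most two atoms; combined with weak-$\ast$ compactness of $\mathcal{P}([0,1])$ and continuity of $q \mapsto F_k(q)$, the infimum is attained. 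The slight subtlety that $v$ or $u$ could coincide with $1-\alpha'$ (degenerate single-atom case) is absorbed by allowing the half-open/closed interval endpoints as stated in the lemma, and I would handle it by a short case distinction at the end.
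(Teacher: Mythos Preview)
Your approach is correct and takes a genuinely different route from the paper. The paper gives a constructive, elementary argument: starting from an arbitrary feasible $Q$, it iteratively partitions $\mathrm{dom}(\mathcal{D}_0)$ into regions $V^+,V^-$ (above/below the mean), halves one of them, and uses a measure-preserving projection to replace $Q$ on one half by its values on the complementary piece; each step preserves $\mathbb{E}[Q]$, does not increase $S(Q)$, and shrinks the ``diameter'' $d(Q)$ by a factor $\leq 0.75$, so the sequence collapses to a two-valued function. Your argument instead first observes that only the pushforward law $\mu$ of $Q(\mathcal{D}_0)$ on $[0,1]$ matters, reducing \Cref{eq:worst-case} to a linear program over $\mathcal{P}([0,1])$ with a single moment constraint, and then invokes weak-$\ast$ compactness plus the Richter--Rogosinski/Bauer extreme-point structure to conclude that the infimum is attained at a measure with at most two atoms. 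Your route is shorter, more conceptual, and makes the two-point reduction transparent as an instance of a standard moment-problem fact; the paper's route is self-contained and avoids citing those tools, at the cost of a noticeably more intricate construction. Both proofs share the same residual loose end you flag: the degenerate single-atom case $u=v=1-\alpha'$ does not literally lie in the stated open ranges $v\in[0,1-\alpha')$, $u\in(1-\alpha',1]$; the paper's proof also sets $u=v=\mathbb{E}[Q]$ in that case without reconciling it with the lemma statement, so your handling is no weaker than the original. Your tightness-of-constraint step via monotonicity of $F_k$ (and $F_k(0)=0$ for $k\ge 1$) is fine and in fact cleaner than the paper's implicit treatment.
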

Let us denote $m = \mathbb{P}_{\mathcal{D}_0}(\mathcal{D}_0\in A)$ obtained from \Cref{lemma-worst-case} and we define 
$$S(u,v,k) :=(1-m)F_k(v) + m F_k(u) .$$

\begin{proposition}
Using \Cref{eq:iterated_proba} and \Cref{lemma-worst-case}, we have the following lower bound of the coverage without assumptions on the distribution of $Q_\star$:

\begin{align*}
    \mathbb{P}_{\mathcal{D}_0,\mathcal{D}}(\theta_\star\in\Theta_k) 
    &\geq \min_{u\in (1-\alpha', 1),v\in[0, 1-\alpha')} \quad S(u,v,k).
\end{align*}
%
%
\end{proposition}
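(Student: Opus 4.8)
The plan is to chain together the three facts already at hand —the iterated-probability identity \Cref{eq:iterated_proba}, the feasibility of $Q_\star$ for the worst-case program \Cref{eq:worst-case}, and the structural reduction of \Cref{lemma-worst-case}— and then collapse the resulting infinite-dimensional infimum into the two-dimensional minimum over $(u,v)$ that defines $S$.

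First, by \Cref{eq:iterated_proba} the quantity to bound is exactly $\mathbb{P}_{\mathcal{D}_0,\mathcal{D}}(\theta_\star\in\Theta_k)=\mathbb{E}_{\mathcal{D}_0}[F_k(Q_\star(\mathcal{D}_0))]$. I would then observe that $Q_\star$ is a feasible point of the program in \Cref{eq:worst-case}: it is $[0,1]$-valued by construction, and \Cref{prop:noise-free-cov} (equivalently, the definition of $\alpha'$, or \Cref{assumption:strong} when it is invoked) gives $\mathbb{E}_{\mathcal{D}_0}[Q_\star(\mathcal{D}_0)]\geq 1-\alpha'$. Hence $\mathbb{E}_{\mathcal{D}_0}[F_k(Q_\star(\mathcal{D}_0))]$ is at least the value of the infimum in \Cref{eq:worst-case}.

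Second, I would evaluate that infimum via \Cref{lemma-worst-case}: there exist $A\subseteq\mathrm{dom}(\mathcal{D}_0)$, $v\in[0,1-\alpha')$ and $u\in(1-\alpha',1]$ such that the infimum is attained at $Q_{v,u,A}=v+(u-v)\mathbbm{1}_{\mathcal{D}_0\in A}$ with $m:=\mathbb{P}_{\mathcal{D}_0}(\mathcal{D}_0\in A)=\tfrac{1-\alpha'-v}{u-v}\in(0,1)$. Since $Q_{v,u,A}$ takes the value $u$ on $A$ and $v$ on $A^c$, we have $F_k(Q_{v,u,A})=F_k(u)\,\mathbbm{1}_{\mathcal{D}_0\in A}+F_k(v)\,\mathbbm{1}_{\mathcal{D}_0\notin A}$, so taking expectations yields $\mathbb{E}_{\mathcal{D}_0}[F_k(Q_{v,u,A}(\mathcal{D}_0))]=m\,F_k(u)+(1-m)F_k(v)=S(u,v,k)$. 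Combining with the previous step gives $\mathbb{P}_{\mathcal{D}_0,\mathcal{D}}(\theta_\star\in\Theta_k)\geq S(u,v,k)$ for this particular pair $(u,v)$.

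Finally, because $S$ depends only on $(u,v)$ (with $m$ determined by $u,v$) and the pair produced by \Cref{lemma-worst-case} satisfies $v\in[0,1-\alpha')$, $u\in(1-\alpha',1]$, the value $S(u,v,k)$ is no smaller than $\min_{u\in(1-\alpha',1),\,v\in[0,1-\alpha')}S(u,v,k)$, which is the claimed bound. The only mildly delicate point is the boundary case $u=1$, where the ``$\min$'' in the statement is really an infimum over a non-compact set; this is dispatched by noting that $F_k$, and hence $u\mapsto S(u,v,k)$ for fixed $v<1-\alpha'$, is continuous at $u=1$, so the infimum over $u\in(1-\alpha',1)$ is at most $S(1,v,k)$. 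Apart from this technicality there is no real obstacle: all of the analytic work —the reduction to a two-point worst-case distribution for $Q_\star$ and the exact formula for $m$— is contained in \Cref{lemma-worst-case}, which we may assume here.
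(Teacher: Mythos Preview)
Your proposal is correct and follows exactly the argument the paper intends: combine \Cref{eq:iterated_proba} with the feasibility of $Q_\star$ in \Cref{eq:worst-case}, then invoke \Cref{lemma-worst-case} to reduce the infimum to the two-point form $S(u,v,k)$. The paper does not spell this out beyond the sentence ``Using \Cref{eq:iterated_proba} and \Cref{lemma-worst-case}'', so your writeup is in fact more complete; your handling of the boundary $u=1$ via continuity of $F_k$ is a reasonable way to reconcile the half-open interval in the lemma with the open interval in the statement.
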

We can now estimate the minimum of $S(u,v,k)$ with a fine-grained grid search over the real valued parameters $(u, v)$. With no assumptions on the construction of $\Gamma$ and its dependency on $\mathcal{D}_0$, we have that $\Theta_{k_{\rm{worst\_case}}}$ satisfies \Cref{eq:target-cov} with
$$
    k_{\rm{worst\_case}}=\max\left\{k \mid \min_{u,v} S(u,v,k) \geq 1-\beta\right\}.
$$

%
%
\subsection{Split Conformal Prediction}

Split conformal prediction was introduced in \citep{papadopoulos2002inductive} to reduce the computational overhead when computing the uncertainty set at every new test point. In this setting, $\mathcal{D}_0$ is split between a training set $\mathcal{D}_{\rm tr}$ to learn a prediction model $\hat f$ and a calibration set $\mathcal{D}_{\rm cal}$ of size $n_{\rm cal}$. A conformity score $C$
will measure how plausible is a given input/output pair.
Split conformal prediction sets are defined as 
$\Gamma(X) = \left\{y: C(y, \hat f(X)) \geq Q_{\alpha}(\mathcal{D}_{\rm cal})\right\},$
where $Q_{\alpha}(\mathcal{D}_{\rm cal})$ is the $\alpha$-quantile of the scores evaluated on the calibration data.
This structure can be leveraged to derive conformal guarantees on the confidence set. More precisely, we have
\begin{align*}
\theta_\star \in \Theta(X) &\Longleftrightarrow f_{\theta_\star}(X) \in \Gamma(X) 
\Longleftrightarrow
C(f_{\theta_\star}(X), \hat f(X) )\geq Q_{\alpha}(\mathcal{D}_{\rm cal}).
\end{align*}

Knowing the construction of $\Gamma$ from $\mathcal{D}_0$, we can use it to study the dependence of the conditional coverage on $\mathcal{D}_0$. Indeed, it was shown in \citep{vovk2012conditional, hulsman2022distributionfree} that 
\begin{equation}\label{eq:split-dep}
Q(\mathcal{D}_0):=\mathbb{P}_{X,Y}(Y\in\Gamma(X)|\mathcal{D}_0)\sim \mathrm{Beta}(i_\alpha, j_\alpha) \, ,
\end{equation}
where
$i_\alpha =\lceil (1-\alpha)(n_{\rm{cal}}+1)\rceil \text{ and }
j_\alpha = n_{\rm{cal}}+1-i_\alpha$. 
Thus, we leverage this distribution of the conditional coverage of $Y$, and we compute tighter bounds by factoring the dependency of $Q_{\star}(\mathcal{D}_0)$.
\begin{proposition}\label{prop:split_lower_bound}
We denote $Q'=Q$ under \Cref{assumption:strong} and $Q'=1-\frac{1-Q}{b}$ otherwise. It holds 
\begin{align*}
\mathbb{P}_{\mathcal{D}_0, \mathcal{D}}[\theta_\star \in \Theta_k] 
= \mathbb{E}_{\mathcal{D}_0}\bigg[F_k(Q_\star(\mathcal{D}_0))\bigg] 
\geq \mathbb{E}_{Q \sim \mathrm{Beta}(i_\alpha, j_\alpha)} \bigg[ F_k(Q') \bigg] =: H(k) .
\end{align*}
We have that $\Theta_{k_{\rm{split}}}$ satisfies \Cref{eq:target-cov} where
$$k_{\rm{split}}=\max\{k \mid H(k)\geq 1-\beta\}.$$
\end{proposition}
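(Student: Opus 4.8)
The plan is to start from the exact identity \Cref{eq:iterated_proba}, namely
$\mathbb{P}_{\mathcal{D}_0,\mathcal{D}}[\theta_\star\in\Theta_k]=\mathbb{E}_{\mathcal{D}_0}[F_k(Q_\star(\mathcal{D}_0))]$, which holds here because within the split CP section we again assume the inputs in $\mathcal{D}$ are independent, so conditionally on $\mathcal{D}_0$ the indicators $\phi_i(\theta_\star)$ are i.i.d.\ Bernoulli with success probability $Q_\star(\mathcal{D}_0)$, and their sum is $\mathrm{Binomial}(n,Q_\star(\mathcal{D}_0))$. The difference from the worst-case section is that here the structure of split CP gives us an explicit handle on the distribution of the conditional coverage, so instead of taking an infimum over all admissible distributions of $Q_\star$, we can lower-bound the expectation directly.

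The key step is to relate $Q_\star(\mathcal{D}_0)$ (the conditional noise-free coverage) to $Q(\mathcal{D}_0)$ (the conditional noisy coverage), whose law is the known $\mathrm{Beta}(i_\alpha,j_\alpha)$ from \Cref{eq:split-dep}. I would argue that the $\mathcal{D}_0$-conditional version of \Cref{prop:noise-free-cov} holds: conditionally on $\mathcal{D}_0$, $\mathbb{P}_X(f_{\theta_\star}(X)\in\Gamma(X)\mid\mathcal{D}_0)\ge 1-\tfrac{1-Q(\mathcal{D}_0)}{b}$ — i.e.\ $Q_\star(\mathcal{D}_0)\ge Q'(\mathcal{D}_0)$ with $Q'=1-\frac{1-Q}{b}$ (and $Q'=Q$ under \Cref{assumption:strong}). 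This is because the proof of \Cref{prop:noise-free-cov} only uses the pointwise noise assumption $d_\xi(x)\ge b$ and can be run verbatim after conditioning on $\mathcal{D}_0$, with $\alpha$ replaced by the conditional miscoverage $1-Q(\mathcal{D}_0)$; the bound $\alpha'=\alpha/b$ in the unconditional statement is just the expectation of $\tfrac{1-Q}{b}$. Since $F_k$ is monotone nondecreasing in its argument, $Q_\star(\mathcal{D}_0)\ge Q'(\mathcal{D}_0)$ gives $F_k(Q_\star(\mathcal{D}_0))\ge F_k(Q'(\mathcal{D}_0))$ pointwise, hence
$\mathbb{E}_{\mathcal{D}_0}[F_k(Q_\star(\mathcal{D}_0))]\ge\mathbb{E}_{\mathcal{D}_0}[F_k(Q'(\mathcal{D}_0))]=\mathbb{E}_{Q\sim\mathrm{Beta}(i_\alpha,j_\alpha)}[F_k(Q')]=H(k)$,
where the last equality just uses that $Q'$ is a deterministic function of $Q$ and $Q\sim\mathrm{Beta}(i_\alpha,j_\alpha)$. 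Chaining with the identity \Cref{eq:iterated_proba} yields the displayed inequality. Finally, $H(k)$ is nonincreasing in $k$ (more constraints shrink the set), so taking $k_{\rm split}=\max\{k\mid H(k)\ge 1-\beta\}$ gives $\mathbb{P}_{\mathcal{D}_0,\mathcal{D}}(\theta_\star\in\Theta_{k_{\rm split}})\ge H(k_{\rm split})\ge 1-\beta$, which is exactly \Cref{eq:target-cov}.

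The main obstacle I anticipate is the conditional version of \Cref{prop:noise-free-cov}: one must check that conditioning on $\mathcal{D}_0$ does not interfere with the argument that passes from noisy to noise-free coverage — in particular that the noise $\xi$ on the fresh point $X$ is still independent of the event structure used in that argument once $\mathcal{D}_0$ is fixed, so that the pointwise bound $\mathbb{P}(\xi\ge 0\mid X=x)\wedge\mathbb{P}(\xi\le 0\mid X=x)\ge b$ can be applied inside the conditional expectation. Everything else (monotonicity of $F_k$ and of $H$, the pushforward to the Beta law, the final thresholding) is routine. I would also note in passing that the independence-of-$\mathcal{D}$ assumption is needed only to justify the binomial form in \Cref{eq:iterated_proba}; it is already in force in the surrounding subsections, but it should be stated explicitly in the hypotheses of \Cref{prop:split_lower_bound}.
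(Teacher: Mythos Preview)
Your proposal is correct and follows essentially the same route as the paper: the paper isolates the conditional version of \Cref{prop:noise-free-cov} as a separate lemma (\Cref{lm:conditional_cov} in the appendix, proved by rerunning the same argument with everything conditioned on $\mathcal{D}_0$) to obtain $Q_\star(\mathcal{D}_0)\ge Q'(\mathcal{D}_0)$, then combines this with \Cref{eq:iterated_proba}, the monotonicity of $F_k$, and the $\mathrm{Beta}(i_\alpha,j_\alpha)$ law of $Q$ exactly as you outline. The only small refinement the paper adds (in the closed-form computation of $H(k)$) is an indicator $\mathbbm{1}_{Q'\in(0,1)}$ when taking the expectation of $F_k(Q')$, since in the non-\Cref{assumption:strong} case $Q'=1-\tfrac{1-Q}{b}$ can be negative; this is harmless because $F_k(\cdot)$ is taken to vanish there, but it is worth making explicit.
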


Which can be computed in closed form, following \Cref{prop:close_form_expected_bound} given in \Cref{appendix:analytical_form}.

\begin{remark}[\textbf{PAC Bounds}]
Since valid coverage does not always holds conditional on the observed data, one might settle for a high probability guarantee rather than an expected one. We derive in \Cref{appendix:pac} a PAC guarantee: $\forall \alpha, \delta \in (0, 1)$ and $\tilde \alpha(\delta) = \alpha + \sqrt{ \frac{\log(1/\delta) }{n_{\rm cal}}}$, we show that
$$
\mathbb{P}_{\mathcal{D}_0} \bigg( \mathbb{P}_{\mathcal{D}}( \theta_\star \in \Theta_{k_{\rm{PAC}}} \mid \mathcal{D}_0 ) \geq 1 - \beta \bigg) \geq 1 - \delta \text{ with } k_{\rm{PAC}} = \max\left\{k: F_k\left(1 - \frac{\tilde \alpha(\delta)}{b}\right) \geq 1 - \beta \right\}.
$$
\end{remark}

\section{Related Work}


\paragraph{Conformal prediction}
 Since their introduction, a lot of work has been done to improve the set of conformal predictions. As simple score function, distance to conditional mean ie $C(y, \hat y) = |y - \hat y|$ where $\hat y$ is an estimate of $\mathbb{E}(y \mid x)$ was prominently used \citep{papadopoulos2002inductive, lei2018distribution}. Instead, \citep{Romano_Patterson_Candes19} suggests estimating a conditional quantile instead and a conformity score function based on the distance from a trained quantile regressor, i.e. 
$C(y, \hat y) = \max(\hat y_{\alpha/2}(x) - y, y -\hat y_{1 - \alpha/2}(x))$ where $\hat y_{\alpha}$ are the $\alpha$-th quantile regressors. Others alternative consists in choosing an estimation of the conditional density of the outputs \citep{lei2013distribution, fong2021conformal, chernozhukov2021distributional, guha2024conformal}
Some others extension beyond iid assumptions in \citep{gibbs2022conformal, lin2022conformal, zaffran2022adaptive}. Efficient implementation of CP methods are available in \citep{taquet2022mapie}.\looseness=-1

\paragraph{Confidence regions} Building confidence regions for the parameters of a model is a well studied problem. These works generally either provide strict guarantees only in the asymptotic regime, such as boostrapping \cite{wasserman2006all}, or recent effort to build confidence regions with heteroskedastic noise \cite{doi:10.1080/01621459.2020.1831924}. Other works, especially in Bayesian statistics, provide finite-sample valid guarantees at the cost of strong assumptions on the noise. More recently, \citet{doi:10.1073/pnas.1922664117} proposed a method requiring to compute the likelihood, which generally implies knowledge of the noise. \citet{10.1214/aoms/1177728718} propose a distribution-free method, but the generated regions are not bounded, limiting their practicality. At the intersection, \citet{angelopoulos2023prediction} is finite-sample valid in the presence of strong noise assumptions, or asymptotically valid without such assumptions, but does not achieve both simultaneously. 
\begin{figure*}[t]
    \centering
    \includegraphics[trim={4cm 2cm 4cm 0}, width=\linewidth]{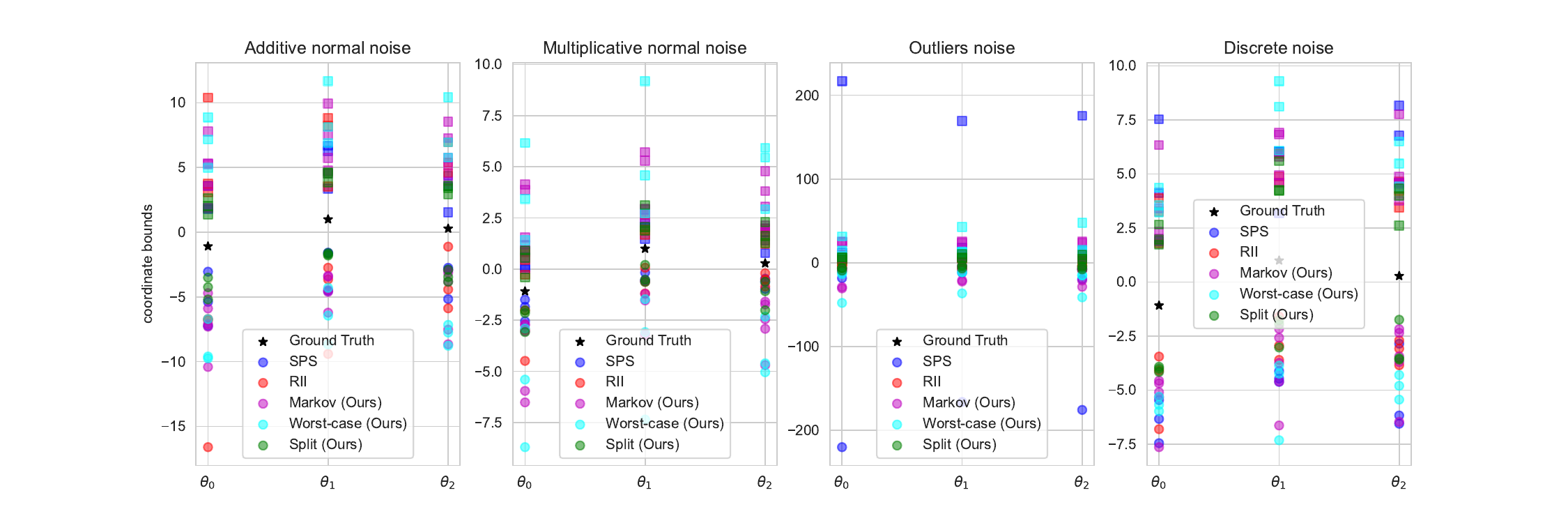}
    \vspace{0.1cm}
    \caption{Illustration of the bounds covering each coordinate of the ground-truth parameter $\theta_\star$ under various configurations of noise, for $\beta=0.1$. Squares (resp. circles) correspond to upper bounds (resp. lower bounds).}
    \label{fig:bounds}
\end{figure*}
A few contemporary works have managed to provide finite-sample valid coverage under reasonable noise assumption such as the symmetry \citep{CAMPI20051751,DALAI20071418,DENDEKKER20085024}, but they only provide methods to determine whether a given $\theta$ belongs in the confidence region. In the absence of compact representations, the applications of such sets are limited. To our knowledge, the first methods to be simultaneously finite-sample valid, bounded, reasonably constraining on the noise, and providing a compact representation are Sign-Perturbed Sums (SPS)~\citep{Csaji_2015} and RII~\citep{guille2024finite}. RII is similar to CCR in the assumptions it makes and in the way it combines confidence intervals on $f_{\theta_\star}(X)$, but instead of CP it uses so-called residual intervals
$C_i:=\left[\min\left(Y_i, \hat f\left(X_i\right)\right),\max\left(Y_i, \hat f\left(X_i\right)\right)\right]$,
with a coverage of at most $50\%$. This limitation forces RII to set $k$ to a low value, causing a relatively large confidence region and leading to expensive resolutions of the underlying MILP for downstream tasks (see \Cref{sec:MILP}). We compare CCR to RII and SPS experimentally in \Cref{sec:exps}.

\section{Applications}
\label{sec:application}

\subsection{Mixed Integer Linear Program}
\label{sec:MILP}

\begin{table}[ht]
\centering
\caption{
Coverage under noises in \Cref{tab:noise-types}, for dimension $d=3$ and $d=40$. We perform $1000$ trials in each setting, and use the least square estimator to generate split conformal predictions based on the residuals, with $\alpha=0.1$. For \texttt{CCR}, we use for each bound the largest $k$ that guarantees a coverage of at least $0.9$ under \Cref{assumption:strong}. SPS coverage is not reported when $d=40$ as its overbound region requires at least as many samples as dimensions. \\
}
\label{table:coverage}
\resizebox{0.8\columnwidth}{!}{
\begin{tabular}{ccccccccc}
\hline
\multirow{2}{*}{} & \multicolumn{4}{c}{$d=3$}                   & \multicolumn{4}{c}{$d=40$}                  \\
$\xi$          & aG & mG & O & D & aG & mG & O & D \\ \hline
SPS               & $98.9$   & $98.6$       & $99.5$     & $98.7$     & -      & -          & -        & -        \\
RII               & $90.8$   & $89.4$       & $90.5$     & $89.7$     & $88.7$   & $90.3$       & $89.8$     & $90.4$     \\
\texttt{Markov}     & $100$  & $100$      & $100$    & $100$    & $99.9$   & $100$      & $100$    & $99.9$     \\
\texttt{Worst-case} & $100$  & $100$      & $100$    & $100$    & $100$  & $100$      & $100$    & $100$    \\
\texttt{Split}      & $99.9$   & $100$      & $99.6$     & $100$    & $99.8$   & $99.7$       & $99.8$     & $99.9$     \\ \hline
\end{tabular}}
\end{table}


\begin{table}[t]
\centering
\begin{minipage}[t]{0.55\linewidth}
\captionsetup{width=.9\linewidth}
\centering
\caption{Average width of the coordinate-wise confidence intervals for different noises.}
\label{table:width}
\begin{tabular}{cccccc}
\hline
                    & \multicolumn{4}{c}{Average width}                              & time ($s$) \\
                   & aG            & mG            & O              & D             &                           \\ \hline
SPS                 & $7.48$        & \textbf{$2.23$} & $139$            & $10.70$         & $0.034$                     \\
RII                 & $12.44$       & $2.77$          & \textbf{$10.63$} & $13.04$         & $12.12$                     \\
\texttt{Markov}     & $11.37$       & $6.03$          & $29.57$          & $9.05$          & $3.58$                      \\
\texttt{Worst-Case} & $15.56$       & $9.19$          & $44.59$          & $10.88$         & $4.56$                      \\
\texttt{Split}      & \textbf{$6.37$}& $2.80$          & $12.28$          & \textbf{$6.53$} & $1.77$                      \\ \hline
\end{tabular}
\end{minipage}%
\hspace{0.08\linewidth}
\begin{minipage}[t]{0.35\linewidth}
\captionsetup{width=.9\linewidth}
\centering
\caption{Rejection rate of \texttt{RII} and \texttt{CCR} on non-linear data.}
\label{table:reject}
\begin{tabular}{ll}
\hline
           & Rejection rate \\
           &                \\ \hline
           &                \\
RII        & $80\%$           \\
\texttt{Markov}     & $86\%$           \\
\texttt{Worst-Case} & $90\%$           \\
\texttt{Split}      & $100\%$          \\ \hline
\end{tabular}
\end{minipage}
\end{table}

Our method constructs the confidence region over $\theta$ by intersecting the sets $\Theta(X_i)$, and their form naturally depends on the data model $f$. We show that in the linear case $f_\theta(X)=\theta^\top X$, $\Theta_k$ can be represented as the feasible set of a MILP program, similar to \citep{guille2024finite}. This enables the optimization of linear objectives, leading to downstream applications that we discuss. 
%

\begin{restatable}{proposition}{milprepresentation}
Leveraging the classical \texttt{big M method} \citep{wolsey2014integer,hillier2001introduction}, for a sufficiently large constant $M$ we have:
\begin{align*}
    &\theta\in\Theta_k \text{ if and only if there are binary variables } (a_i)_{i\in[n]} \text{ such that }\\
    &\sum_{i=1}^n a_i \geq k, \; \text{and} \;\;
    \forall i,\, A(X_i)- M (1-a_{i}) \leq \theta^\top X_i \leq B(X_i)+ M (1-a_{i}).
\end{align*}
%
 \end{restatable}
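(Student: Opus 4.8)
The plan is to establish the equivalence in two directions, relying on the fact that for the linear model $f_\theta(X) = \theta^\top X$ we have $\Theta(X_i) = \{\theta : A(X_i) \le \theta^\top X_i \le B(X_i)\}$, and hence $\phi_i(\theta) = \mathbbm{1}_{A(X_i) \le \theta^\top X_i \le B(X_i)}$ by \Cref{assum:gamma_is_interval}. By definition $\theta \in \Theta_k$ iff $\sum_{i=1}^n \phi_i(\theta) \ge k$, so it suffices to show that the existence of binary variables $(a_i)$ with $\sum_i a_i \ge k$ and the big-$M$ inequalities is equivalent to $\sum_i \phi_i(\theta) \ge k$.

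For the forward direction, suppose $\theta \in \Theta_k$. Set $a_i := \phi_i(\theta) \in \{0,1\}$, so $\sum_i a_i = \sum_i \phi_i(\theta) \ge k$. When $a_i = 1$ we have $f_\theta(X_i) \in \Gamma(X_i)$, i.e. $A(X_i) \le \theta^\top X_i \le B(X_i)$, and since $M(1-a_i) = 0$ the big-$M$ inequalities hold with equality of the offset; when $a_i = 0$, the inequalities read $A(X_i) - M \le \theta^\top X_i \le B(X_i) + M$, which is satisfied provided $M$ is chosen large enough to dominate $|\theta^\top X_i|$, $|A(X_i)|$ and $|B(X_i)|$ over the (finite) sample. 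For the reverse direction, suppose binary $(a_i)$ exist satisfying the stated constraints. For each $i$ with $a_i = 1$, the constraint forces $A(X_i) \le \theta^\top X_i \le B(X_i)$, hence $\phi_i(\theta) = 1$. Therefore $\sum_i \phi_i(\theta) \ge \sum_i a_i \ge k$, so $\theta \in \Theta_k$.

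The only real subtlety — and the step I would be most careful about — is the precise meaning of ``sufficiently large $M$''. Because $\mathcal{D}$ is a fixed finite sample and one typically restricts optimization of $\theta$ to a bounded search region (or the confidence region of interest is itself effectively bounded once $k$ is large), one can take $M \ge \max_i \big( |A(X_i)| + |B(X_i)| \big) + \max_i \sup_{\theta} |\theta^\top X_i|$, or simply note that any $M$ exceeding $\max_i(|A(X_i)|, |B(X_i)|) + R\,\max_i \|X_i\|$ works when $\theta$ ranges over a ball of radius $R$. I would state this dependence explicitly so that the ``big-$M$ method'' invocation is rigorous rather than folklore; the rest of the argument is the routine correctness proof of a big-$M$ disjunctive encoding and requires no further calculation.
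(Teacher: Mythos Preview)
Your argument is correct and follows the same big-$M$ disjunctive-encoding route as the paper: define $\Theta_k$ via $\sum_i \phi_i(\theta)\ge k$, identify $a_i$ with $\phi_i(\theta)$, and check that the constraint becomes trivial when $a_i=0$ and exact when $a_i=1$. If anything, your version is more careful than the paper's, which leaves the quantification of ``sufficiently large $M$'' entirely implicit.
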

This formulation allows us to optimize linear objectives over $\Theta_k$ using MILP solvers, for instance for robust optimization in the context of Wald's minimax model \citep{wald39,wald45}. 
Alternatively, this formulation can be used to assess whether $\Theta_k$ is empty, which may occur when the predictor used by the CP methods is non-linear, and is akin to rejecting the linearity of the data. Interestingly, prior works almost always contain at least the least square estimator \citep{Csaji_2015}, even when the data is not linearly distributed. We refer to this application as hypothesis testing, because under the null hypothesis that the data is linear, the probability of $\Theta_k$ being empty (and thus not containing the ground truth parameter) is at most $\beta$.
Finally, we derive confidence intervals on specific coordinates of $\theta$ with implications for feature selection and interpretability \citep{guille2024finite}, by solving for $\max_{\theta\in\Theta_k}\theta_i$ and $\min_{\theta\in\Theta_k}\theta_i$ for a coordinate of interest $i \in \mathbb{R}^d$.

\subsection{Regression with Conformal Abstention}

Previous research efforts have proposed machine learning algorithms that refrain from making predictions when \emph{they don't know}, \citep{chow1994recognition, geifman2019selectivenet}.
\citet{zaoui2020regression} show that the optimal predictor will output $\mathbb{E}(Y \mid X=x)$ when the conditional standard deviation $\sigma^{\star}(x)$ is smaller than some prescribed threshold and abstain otherwise.
\begin{figure*}
  \centering
  \begin{subfigure}{0.49\textwidth}
  \includegraphics[width=\linewidth]{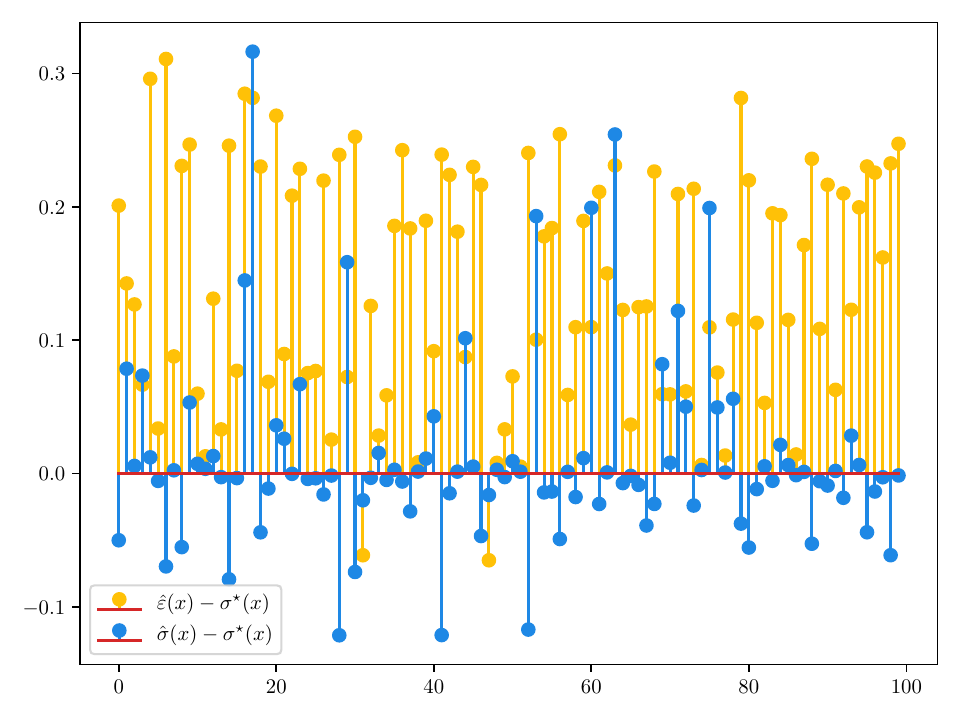}
  \caption{Estimation of standard deviation $\sigma^\star(X)$}
  \end{subfigure}
  \begin{subfigure}{0.49\textwidth}
  \label{eq:not_conservative}
  \includegraphics[width=\linewidth]{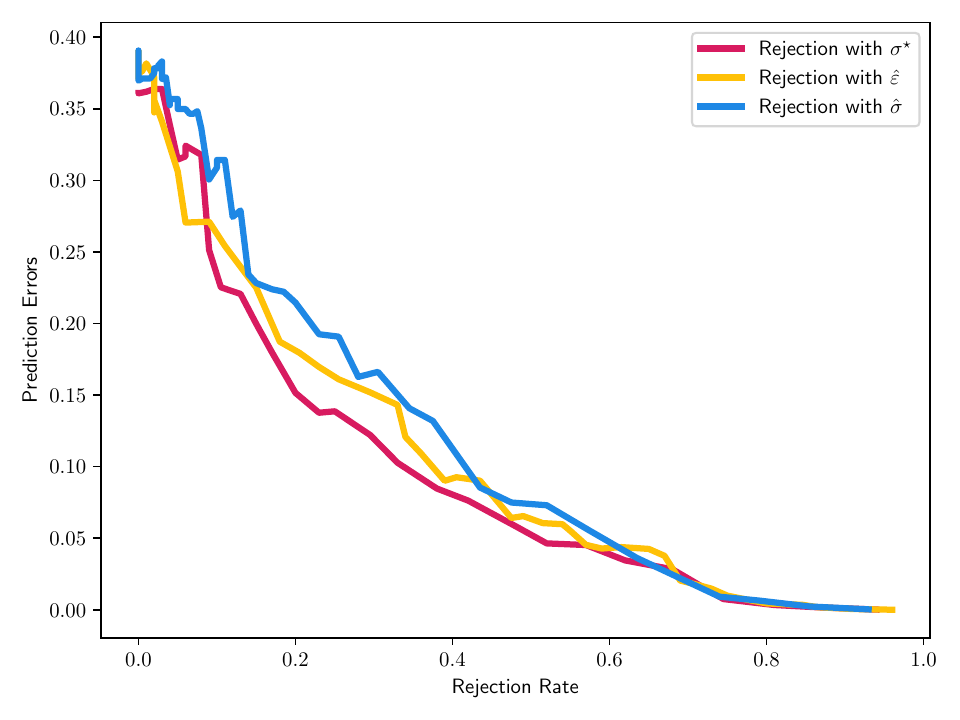}
  \caption{Trade-off between rejection frequency and MSE}
  \end{subfigure}
    \caption{Regression with rejection option with $Y_i = \sin(X_i) + \frac{\pi |X_i|}{20} \xi_i$ where $\xi_i \sim \mathcal{N}(0, 1)$.}
    \label{fig:safe_variance_estimate}
\end{figure*}
In practice, they suggest a plug-in approach that replaces the conditional mean and variance with their estimate.
Our approach leads to a natural predictor with a rejection option and a strong finite sample guarantee. One rejects the prediction when the size of the conformal set is large, reflecting high uncertainty. This was used to limit erroneous prediction in classification \citep{linusson2018classification}. We provide a non-asymptotic analysis of such approach in regression setting by showing that there is a high probability that our predictor is not empty if the amount of noise at input $X$ is sufficiently small.
The predictor with conformal abstention will predict when, $\hat \xi(X)$ the length of the conformal set $\Gamma(X)$, is smaller than a threshold.
Since with high probability, we can obtain a set that contains both $Y$ and $f_{\theta^\star}(X)$, we can expect to bound the distance between them by the length of the conformal set. \looseness=-1
\begin{restatable}{proposition}{boundabstention}
With the setting of \Cref{prop:noise-free-cov}, it holds
$$\mathbb{P}_{\mathcal{D}_0\cup (X, Y)}\bigg(|Y - f_{\theta^\star}(X)|\leq \hat \xi(X)\bigg) 
\geq 1 - \left(1 + \frac1b\right) \alpha.$$
%
\end{restatable}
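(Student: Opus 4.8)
The plan is to combine the two coverage guarantees already in hand — the conformal coverage of the noisy output $Y$ from \Cref{eq:conformal_cov_base} and the noise-free coverage of $f_{\theta_\star}(X)$ from \Cref{prop:noise-free-cov} — via a single union bound, after observing that simultaneous membership of both points in the common interval $\Gamma(X)$ forces their separation to be at most the length of that interval.

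First I would record that, under \Cref{assum:gamma_is_interval}, $\Gamma(X) = [A(X), B(X)]$ and $\hat\xi(X) = B(X) - A(X)$ is exactly its length. The geometric heart of the argument is then elementary: if both $Y \in \Gamma(X)$ and $f_{\theta_\star}(X) \in \Gamma(X)$, then both points lie in an interval of length $\hat\xi(X)$, so $|Y - f_{\theta_\star}(X)| \leq \hat\xi(X)$. Taking contrapositives, the ``bad'' event is contained in a union:
\[
\bigl\{ |Y - f_{\theta_\star}(X)| > \hat\xi(X) \bigr\}
\ \subseteq\
\bigl\{ Y \notin \Gamma(X) \bigr\} \cup \bigl\{ f_{\theta_\star}(X) \notin \Gamma(X) \bigr\}.
\]

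Second I would apply the union bound to this inclusion over the joint law $\mathbb{P}_{\mathcal{D}_0 \cup (X,Y)}$. The event $\{ f_{\theta_\star}(X) \in \Gamma(X) \}$ does not depend on $Y$, so the marginal guarantee of \Cref{prop:noise-free-cov} transfers verbatim to this joint law; hence $\mathbb{P}( f_{\theta_\star}(X) \notin \Gamma(X) ) \leq \alpha/b$. Combined with $\mathbb{P}( Y \notin \Gamma(X) ) \leq \alpha$ from \Cref{eq:conformal_cov_base}, the union bound gives $\mathbb{P}( |Y - f_{\theta_\star}(X)| > \hat\xi(X) ) \leq \alpha + \alpha/b = (1 + 1/b)\alpha$, and complementation yields the stated bound.

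I do not anticipate a genuine obstacle; the only point deserving a line of care is confirming that the two coverage statements live on the same probability space — the noise-free one being phrased over the $X$-marginal and the conformal one over $(X,Y)$ — which is immediate since enlarging by a coordinate irrelevant to the event preserves its probability. (If one invoked \Cref{assumption:strong} instead, the identical argument with $\alpha' = \alpha$ would sharpen the bound to $1 - 2\alpha$, but that is not what is claimed here.)
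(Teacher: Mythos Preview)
Your proposal is correct and matches the paper's own proof essentially line for line: both observe that joint membership of $Y$ and $f_{\theta_\star}(X)$ in $\Gamma(X)=[A(X),B(X)]$ forces $|Y-f_{\theta_\star}(X)|\le \hat\xi(X)$, then apply a union bound with $\mathbb{P}(Y\notin\Gamma(X))\le\alpha$ and $\mathbb{P}(f_{\theta_\star}(X)\notin\Gamma(X))\le\alpha/b$. Your extra remark about the two coverage statements living on the same probability space is a harmless (and reasonable) bit of additional care.
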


In \Cref{fig:safe_variance_estimate}, we illustrate the discrepancy between a standard deviation estimate and our safe approach. Numerical observations indicate that the estimated conditional variance frequently tends to be smaller than its target; while the length of the conformal set is often an upper bound for the standard deviation $\sigma^\star$. This means that if the optimal predictor 
is uncertain enough to make a prediction, we will also safely abstain; this holds for each data point.
In contrast to previous methods, our theoretical analysis holds in finite sample size, with mild assumption on the noise, and for any predictor $\hat f$. Despite being a safer approach for each instance, \Cref{eq:not_conservative} illustrates that, in average, the prediction with conformal abstention is not too conservative.

%
%

\vspace{-1mm}
\section{Numerical Experiments}
\label{sec:exps}

We evaluate our method in the usual setting of linearly distributed data, and compare it to \texttt{SPS} and \texttt{RII}. We emphasize that similarly to previous works \citep{guille2024finite,Csaji_2015} we only evaluate on synthetic data, because ground truth parameters are not known for real world datasets. Therefor, the validity of coverage bounds can not be confirmed, and the provided guarantees are never valid for distributions that may not be perfectly linear, making confidence regions impossible to compare fairly. We consider four distinct types of noise $\xi$ summarized in \Cref{tab:noise-types}. In all of our experiments, unless stated otherwise, we set $\alpha=0.1$, sample $\theta_\star$ from $\mathcal{N}(0,1)$, and the $X_i$ are sampled independently from $\mathrm{Unif}([0,1]^d)$.

\paragraph{Coverage} In \Cref{table:coverage}, the display coverage of various methods. We find that even with $k$ chosen based on \Cref{assumption:strong}, \texttt{CCR} strongly overbounds $\theta_\star$, with a coverage much above the guaranteed rate. This comes from the CP intervals having a much larger coverage for $f_{\theta_\star}(X)$ than for the noisy output $Y$. While we seek tight confidence regions, it is beneficial to observe higher than expected coverage as long as the size of $\Theta$ remains small. Interestingly, we find that \texttt{SPS} is not capable of producing optimizable confidence regions when the dimension is larger than the training set size.

\paragraph{Coordinate intervals} \Cref{fig:bounds} illustrates the lower and upper bounds per coordinate inferred by optimizing over \texttt{SPS}, \texttt{RII} and \texttt{CCR}. The average width for each method and noise, as well as the corresponding computation time, are reported in \Cref{table:width}. Experiments are run in dimension $d=3$ for the sake of clarity, with a training set of $20$ points. For each method, we run $3$ independent trials (with fixed $\theta_\star$) to visualize the variance of the produced coordinate bounds. We find that while SPS is very fast and performs well on Gaussian noises, its confidence regions tend to be unreliable on less standard noises. While \texttt{RII} is versatile, its computation time for the bounds is high due to the adoption of a low $k$, resulting in more combinatorial complexity for the MILP. Furthermore, we find that Markov and worst-case bounds tend to perform worse than prior works, suggesting that the uncertainty on the distribution of the coverage for black-box CP methods is limiting. The split CP bound achieves a reasonable running time (though much slower than \texttt{SPS}), and produces confidence intervals across noises that are smaller or comparable to other methods, which supports its applicability. \looseness=-1



\paragraph{Hypothesis testing} Following \citep{guille2024finite}, we consider the model $Y=\theta_\star^\top X + 0.5 \sin(8\pi \|X\|_2) + \mathcal{N}(0,1).$
Using the least square predictor on $Z:=(X,\sin(8\pi\|X\|_2)$, we measure whether $\Theta_k$ is  empty, rightfully rejecting the linearity hypothesis, and average this rate over $200$ trials. \texttt{CCR} outperforms all methods in rejecting linearity. Unlike \texttt{SPS}, it's noteworthy that \texttt{CCR} can reject linearity, as the least square estimator on $X$ is in the confidence region.\looseness=-1

\paragraph{Conclusion} We presented \texttt{CCR}, a method for constructing confidence regions under minimal assumptions on the noise, for which we provided finite-sample valid coverage guarantees in different settings. Additional discussions, such as on apparent limitations of our work, can be found in \Cref{appendix:discussion} \looseness=-1

\bibliographystyle{plainnat}
\bibliography{biblio}

\newpage
\appendix

\section{Additional Experiments}
\label{appendix:add-exps}
\Cref{table:cov-noisy-ext} gives an extended version of \Cref{table:cov-noisy} over a wider variety of settings, with the same conclusions.

\begin{table}[ht]
\centering
\caption{Comparison of Coverage: $\Gamma$ is build by split residual CP or Conformalized Quantile CP \citep{NEURIPS2019_5103c358} with $\alpha=0.1$ for linear $f_{\theta_\star}(X)$ and $Y$. Averaged over $1000$ trials of $1000$ test samples each in dimension $d=50$, for different training set sizes and noise distributions (see \Cref{sec:exps}). The last column indicates the number of trials where $\Gamma$ had better coverage for $Y$ than for $f_{\theta_\star}(X)$.\looseness=-1 \\}
\label{table:cov-noisy-ext}
\begin{tabular}{cccccc}
\hline
CP Method & Train Size & Noise & $f_{\theta_\star}(X)$ & $Y$ & \# Losses \\ \hline
\multirow{6}{*}{Residual} & \multirow{3}{*}{40}  & Gaussian       & $91.1\%$ & $91.0\%$ & $0/1000$ \\
                          &                      & Outliers       & $93.4\%$ & $91.0\%$ & $0/1000$ \\
                          &                      & Mult. Gaussian & $92.3\%$ & $91.4\%$ & $0/1000$ \\
                          & \multirow{3}{*}{100} & Gaussian       & $96.8\%$ & $90.9\%$ & $0/1000$ \\
                          &                      & Outliers       & $94.0\%$ & $91.4\%$ & $0/1000$ \\
                          &                      & Mult. Gaussian & $96.4\%$ & $90.7\%$ & $0/1000$ \\ \hline
\multirow{6}{*}{Quantile} & \multirow{3}{*}{40}  & Gaussian       & $92.3\%$ & $92.3\%$ & $0/1000$ \\
                          &                      & Outliers       & $95.0\%$ & $92.4\%$ & $0/1000$ \\
                          &                      & Mult. Gaussian & $92.6\%$ & $92.1\%$ & $0/1000$ \\
                          & \multirow{3}{*}{100} & Gaussian       & $90.4\%$ & $90.3\%$ & $0/1000$ \\
                          &                      & Outliers       & $93.5\%$ & $90.4\%$ & $0/1000$ \\
                          &                      & Mult. Gaussian & $90.8\%$ & $90.2\%$ & $0/1000$ \\ \hline
\end{tabular}
\end{table}

\section{Additional Discussions}

\label{appendix:discussion}

We introduced \texttt{CCR}, a method for aggregating conformal prediction intervals into finite-sample valid confidence regions on model parameters, adaptable to diverse noise types. For linear models, the resulting confidence region can be formulated as an MILP, showing competitiveness with previous approaches in various downstream applications. In the following, we discuss some of the critical drawbacks of the proposed methods.

\subsection{Limitations}

\subsubsection{Model well-specification }

Our approach begins with the explicit relation between inputs and outputs, as stated in \Cref{eq:model}. If this relation is incorrect, namely in the case of model misspecification, our claims will not hold. Nevertheless, in contrast to previous approaches, the departure from this core assumption is controlled by the parameter $b$, which is assumed to be known. Incorrectly setting this parameter can lead to unreliable results. This is to be contrasted with the assumption that the shape of the noise distribution is known, which is a significantly stronger assumption than the one made here.

\subsubsection{Inadequacy in high dimensional regime}

Our main goal is to provide systematic way of building confidence sets for model parameters for data-driven prediction methods. Our theory comes with explicit bounds that tightly quantify the coverage we can guarantee. However, good coverage does not necessarily align with efficiency (ie small enough confident region) and a trade-off between the two might be necessary. As we displayed the coverage curves in \Cref{fig:k-cov} and illustrated the boundness of our uncertainty region for linear models, we must restrict the application to small enough dimension in order to maintain a reasonable efficiency. As a consequence, our method is of limited used in modern deep learning models.

\subsubsection{Suboptimality wrt standard methods in the well behaved distribution settings}

\begin{figure}
  \centering
  \includegraphics[width=0.8\textwidth]{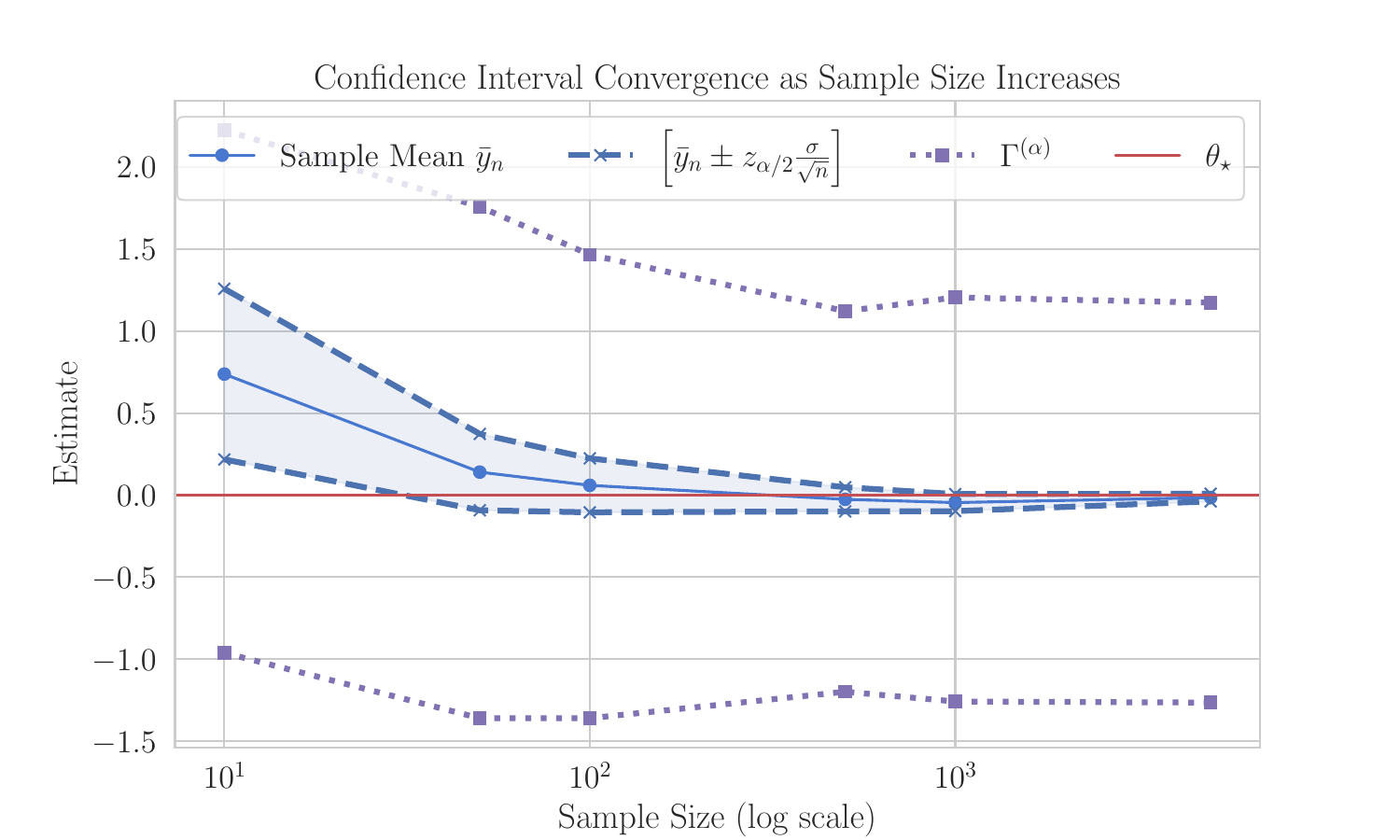}
  \caption{Limitations of \texttt{CCR} compared to classical approach when the model and the data distribution behave nicely. One can observe a great advantage of the classical method that reduces the variance and shrink to the ground-truth as we collect more data.}
  \label{fig:limitation_ccr}
\end{figure}

\begin{figure}
  \centering
  \includegraphics[width=\linewidth]{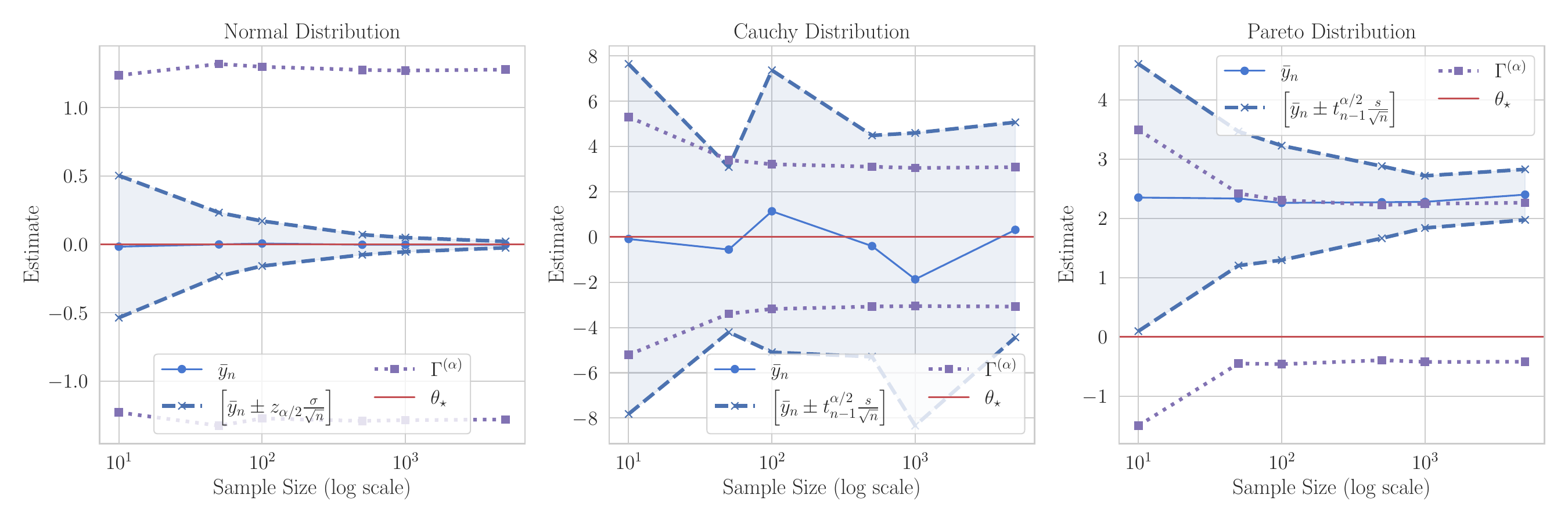}
  \caption{Different noise setting where the classical confidence set estimate the standard deviation. In all setting, we use the median as the base predictor for conformal prediction. We recall that in these examples, the classical strategy have no validity. Our approach start by establishing validity first and then proceed to improve efficiency. As one can observed in the case of Pareto distribution, the standard confidence set can be misleading and this is hard to spot with real dataset since the ground-truth is never known. By setting $b=0.5$, our method is simultaneously valid for any symmetric noise. \looseness=-1}
  \label{fig:issues_classical}
\end{figure}

Consider a sequence of iid variable $y_1, y_2 \ldots$ following a gaussian distribution $\mathcal{N}(\theta_\star, \sigma^2)$ where for simplicity we suppose that the standard deviation $\sigma > 0$ is known.
In that case, the ground-truth $\theta_\star$ can be estimated with the empirical mean $\bar{y}_n = \frac1n \sum_{i=1}^{n} y_i$ and we know exactly the distribution of the estimation error $\bar{y}_n  - \theta_\star \sim \mathcal{N}(0, \frac{\sigma^2}{n})$. Thus, a confidence interval for the model parameter is obtained as \looseness=-1

\begin{equation}\label{eq:classical_cs}
    \mathbb{P}\left(\theta_\star \in \left[\bar{y}_n \pm z_{\alpha/2} \frac{\sigma}{\sqrt{n}} \right] \right) \geq 1 - \alpha.
\end{equation}

Similar strategy can be used to sequentially build prediction sets for the outputs. Indeed, we have $\bar{y}_{n-1}  - \theta_\star \sim \mathcal{N}(0, \frac{\sigma^2}{n-1})$ and $y_n = \theta_\star + \mathcal{N}(0, \sigma^2)$. As such, we can also characterize the distribution of errors $y_n - \bar{y}_{n-1}  \sim \mathcal{N}(0, \sigma^2 \frac{n}{n - 1})$. Thus, a prediction set is obtained as

\begin{equation}\label{eq:predset_var}
    \mathbb{P}\left(y_n \in  \left[\bar{y}_{n-1} \pm z_{\alpha/2} \sigma \sqrt{\frac{n}{n-1}} \right]  \right) \geq 1 - \alpha.
\end{equation}

Following our strategy, we should exploit the knowledge $y_n = \theta_\star + \mathcal{N}(0, \sigma^2)$ and then
$$y_n - \bar{y}_{n-1}  \sim \mathcal{N}\left(0, \sigma^2 \frac{n}{n - 1}\right) \Longrightarrow \theta_\star - \bar{y}_{n-1}  \sim \mathcal{N}\left(0, \sigma^2\frac{2n - 1}{n - 1}\right)$$

and finally
\begin{equation}\label{eq:conformal_cs_var}
    \mathbb{P}\left(\theta_\star \in  \left[\bar{y}_{n-1} \pm z_{\alpha/2} \sigma \sqrt{\frac{2n - 1}{n-1}} \right]  \right) \geq 1 - \alpha.
\end{equation}

As one can see, the interval \Cref{eq:classical_cs} is much better than \Cref{eq:conformal_cs_var} as the former shrink to a point mass while the latter does not. Also, both are affected by the standard deviation $\sigma$.

However, that's not exactly what we are doing. We can avoid being affected by $\sigma$. Indeed our approach still hold even when $\sigma = + \infty$. First, notice that using \Cref{eq:predset_var} directly with \Cref{eq:noise-free-cov} reads

\begin{equation}
    \mathbb{P}\left(\theta_\star \in  \left[\bar{y}_{n-1} \pm z_{\alpha/2} \sigma \sqrt{\frac{n}{n-1}} \right]  \right) \geq 1 - 2 \alpha.
\end{equation}
This mean that explicitly exploiting the shape of the distribution instead of its sole symmetry might not be a good idea because, the set does not shrink to a point mass while still being affected, explicitly, by the standard deviation. This does not allows to handle cases where the variance is infinite.

\paragraph{Leveraging Conformal Prediction set} Now we make no further assumption beside $b=1/2$ which is true for gaussian distribution. Our base predictor is the empirical mean and the score function is simply the distance to the mean (any estimator can be used e.g. median can be used) ie 
$$E_i(z) = |y_i - \bar{y}(z)|, \quad E_n(z) = |z - \bar{y}(z)|, \text{ where } \bar{y}(z) = \frac{1}{n}\left(\sum_{i=1}^{n-1} y_i + z \right)
$$
It is easy to see that $E_n(z) = |z - \bar{y}(z)| = \frac{n-1}{n} |z - \bar{y}_{n-1}|$. Defining $Q_{1-\alpha}(z)$ as the $(1-\alpha)$-quantile of $\{E_1(z), \ldots, E_{n-1}(z), E_n(z)\}$, the $(1-\alpha)$-conformal set for $y_n$ is given by:
$$
\Gamma^{(\alpha)} = \left\{z \in \mathbb{R} : |z - \bar{y}_{n-1}| \leq \frac{n}{n-1} Q_{1-\alpha}(z) \right\}
$$
Our result \Cref{eq:noise-free-cov} (resp. The split CP version) reads
$$
\mathbb{P}\left(\theta_\star \in \Gamma^{(\alpha)} \right) \geq 1 - 2 \alpha, \qquad 
\mathbb{P} \left(\theta_\star \in [\bar{y}_{\mathrm{tr}} \pm Q_{1-\alpha}(\mathrm{cal})] \right) \geq 1 - 2 \alpha.
$$

Even if for a given dataset, the radius $Q_{1-\alpha}(\mathrm{cal})$ is always finite, it will be large if the variance of the underlying distribution is large. This occurs because the base predictor may have difficulty accurately predicting the targets, resulting in large score functions. 



\newpage
\section{Proofs}
\label{appendix:proof}

In this section, we provide detailed proofs for the propositions outlined in the main paper.

\subsection{Coverage of the noiseless output}




\noisefreecov*

\begin{proof}

For simplicity, let us denote $\Gamma(X):=[A(X),B(X)]$ and $E:=Y\in\Gamma(X)$. We have
\begin{align*}
    \mathbb{P}(E)&=\mathbb{P}\left(E|f_{\theta_\star}(X)\in \Gamma\left(X\right)\right)\mathbb{P}\left(f_{\theta_\star}(X)\in\Gamma\left(X\right)\right) \\
    &+\mathbb{P}\left(E|f_{\theta_\star}(X)<A\left(X\right)\right)\mathbb{P}\left(f_{\theta_\star}(X)<A\left(X\right)\right) \\
    &+\mathbb{P}\left(E|f_{\theta_\star}(X)>B\left(X\right)\right)\mathbb{P}\left(f_{\theta_\star}(X)>B\left(X\right)\right) 
\end{align*}

If $f_{\theta_\star}(X)<A(X)$, then $f_{\theta_\star}(X)+\xi\in\Gamma(X)\Longrightarrow \xi>0$. Thus,
\begin{equation*}
    \mathbb{P}(E|f_{\theta_\star}(X)<A(X))\leq \mathbb{P}(\xi>0|f_{\theta_\star}(X)<A(X))\leq 1-\mathbb{P}(\xi\leq 0|f_{\theta_\star}(X)<A(X)).
\end{equation*}
From \Cref{assum:noise}, $\forall x\in\mathbb{R}^d, \mathbb{P}(\xi\leq 0|X=x)\geq b$ and thus 
\begin{equation*}
\mathbb{P}(E|f_{\theta_\star}(X)<A(X))\leq 1-b.
\end{equation*}
Similarly,
\begin{equation*}
    \mathbb{P}(E|f_{\theta_\star}(X)>B(X))\leq 1-b
\end{equation*}

Injecting above, and upper bounding $\mathbb{P}\left(E|f_{\theta_\star}(X)\in \Gamma\left(X\right)\right)$ with $1$, we obtain
\begin{align*}
    \mathbb{P}(E)&\leq \mathbb{P}\left(f_{\theta_\star}(X)\in\Gamma\left(X\right)\right) + (1-b)\bigg(\mathbb{P}\left(f_{\theta_\star}(X)<A\left(X\right)\right) + \mathbb{P}\left(f_{\theta_\star}(X)>B\left(X\right)\right)\bigg) \\
    &\leq \mathbb{P}\left(f_{\theta_\star}(X)\in\Gamma\left(X\right)\right) + (1-b)(1-\mathbb{P}\left(f_{\theta_\star}(X)\in\Gamma\left(X\right)\right))\\
    &\leq b\times\mathbb{P}\left(f_{\theta_\star}(X)\in\Gamma\left(X\right)\right)+1-b.
\end{align*}

Or equivalently,

\begin{equation*}
    \mathbb{P}\left(f_{\theta_\star}(X)\in\Gamma\left(X\right)\right)\geq \frac{\mathbb{P}(E)-(1-b)}{b}\geq 1-\alpha'
\end{equation*}

Which concludes the proof.
\end{proof}

\subsection{Lower bounds on the expected coverage}

\indepisworstcase*

\begin{proof}
    It holds
\begin{equation*}
    \mathbb{P}_{\mathcal{D},{\mathcal{D}_{0}}}(\theta_\star\in\Theta_n)=\mathbb{E}_{\mathcal{D}_{0}}[\mathbb{P}_{\mathcal{D}}(\theta_\star\in\Theta_n|{\mathcal{D}_{0}})],
\end{equation*}

and 

\begin{equation}
\label{eq-proof-4.3-1}
    \mathbb{P}_{\mathcal{D}}(\theta_\star\in\Theta_n|{\mathcal{D}_{0}})=\mathbb{P}_{\mathcal{D}}(\forall i\in [n], f_{\theta_\star}(X_i) \in \Gamma(X_i)|{\mathcal{D}_{0}}).
\end{equation}

Since the $X_i$ are assumed to be sampled independently, it is easy to see that conditionally on ${\mathcal{D}_{0}}$, $f_{\theta_\star}(X_i)\in\Gamma(X_i)$ is independent from $f_{\theta_\star}(X)_j\in\Gamma(X_j)$ for $i\neq j$. Thus, \Cref{eq-proof-4.3-1} can be rewritten as

\begin{align*}
    \mathbb{P}_{\mathcal{D},{\mathcal{D}_{0}}}(\theta_\star\in\Theta_n)&=\mathbb{E}_{\mathcal{D}_{0}}\bigg[\prod_{i=1}^n \mathbb{P}_{X_i}(f_{\theta_\star}(X_i)\in\Gamma(X_i)|{\mathcal{D}_{0}})\bigg] \\
    &=\mathbb{E}_{\mathcal{D}_{0}}\bigg[\mathbb{P}_X(f_{\theta_\star}(X)\in\Gamma(X)|{\mathcal{D}_{0}})^n\bigg].
\end{align*}
Now it only remains to use Jensen's inequality to find that
\begin{align*}
    \mathbb{E}_{\mathcal{D}_{0}}\bigg[\mathbb{P}_X(f_{\theta_\star}(X)\in\Gamma(X)|{\mathcal{D}_{0}})^n\bigg]&\geq \bigg(\mathbb{E}_{\mathcal{D}_{0}}\bigg[\mathbb{P}_X(f_{\theta_\star}(X)\in\Gamma(X)|{\mathcal{D}_{0}}\bigg]\bigg)^n \\
    &=\bigg(\mathbb{P}_X(f_{\theta_\star}(X)\in\Gamma(X)\bigg)^n \\
    &\geq (1-\alpha')^n.
\end{align*}
Concluding the proof.
\end{proof}

\lemmaworstcase*

\begin{proof}

    Let $$S(Q):=\mathbb{E}_{\mathcal{D}_0}\left[\sum_{i=k}^{n} \binom{n}{i}Q(\mathcal{D}_0)^i(1-Q(\mathcal{D}_0))^{n-i}\right]$$
    
    The structure of the proof is as follow. We first prove that given a feasible $Q$, it is possible to shrink the support set of $Q$ around two values $u$ and $v$ while preserving $\mathbb{E}[Q]$ and preserving or decreasing $S(Q)$.

    Let $Q\in [0,1]^{dom(\mathcal{D}_0)}$ such that $\mathbb{E}[Q]\geq 1-\alpha'$.

    We want to prove that there exists $0\leq v\leq u\leq 1$ and $Q'\in \{v,u\}^{dom(\mathcal{D}_0)}$ such that $\mathbb{E}[Q']\geq 1-\alpha'$ and $S(Q')\leq S(Q)$.
    If $Q$ is constant, then we can take $Q'=Q$ with $u=v=\mathbb{E}[Q]$. Else, let
    $$V^+=\{\mathcal{D}_0|Q(\mathcal{D}_0)\leq \mathbb{E}[Q]\}$$ and
    $$V^-=\{\mathcal{D}_0|Q(\mathcal{D}_0)> \mathbb{E}[Q]\}.$$

    Since $Q$ is not constant, there are some $\mathcal{D}_0$ for which it is larger than its expectation, and some for which it is smaller, thus $V^+\neq \emptyset$ and $V^-\neq \emptyset$.
    Let us denote 
    $$d(Q):=d^+ +d^-,$$
    with $d^+:=\max_{\mathcal{D}_0\in V^+}(Q(\mathcal{D}_0))-\min_{\mathcal{D}_0\in V^+}(Q(\mathcal{D}_0))$ and $d^-:=\max_{\mathcal{D}_0\in V^-}(Q(\mathcal{D}_0))-\min_{\mathcal{D}_0\in V^-}(Q(\mathcal{D}_0))$.
    Let us consider $d^+\geq d^-$. We now split $V^+$ into $V_1^+:=\{\mathcal{D}_0\in V^+|Q(\mathcal{D}_0)\leq \min_{\mathcal{D}_0\in V^+}(Q(\mathcal{D}_0))+\frac{d^+}{2}\}$ and $V_2^+:=\{\mathcal{D}_0\in V^+|Q(\mathcal{D}_0)> \min_{\mathcal{D}_0\in V^+}(Q(\mathcal{D}_0))+\frac{d^+}{2}\}$. We now want to find $V_1^-\subseteq V^-$ such that \begin{equation}
        \label{eq:lemma:1}
    \mathbb{E}[Q|\mathcal{D}_0\in V_1^-\cup V_1^+]=\mathbb{E}[Q]
    \end{equation}

$V_1^-$ can for instance be constructed by taking $\mathcal{D}\in V^-$ and considering the set $V^-\cap \mathcal{B}(\mathcal{D}, r)$ where $\mathcal{B}(c, r)$ represents the ball of center $c$ and radius $r$ for the euclidean metric, viewing $\mathcal{D}_0$ as an element of $\mathbb{R}^{nd}$, and increasing $r$ until reaching $\mathbb{E}[Q]$. In that construction, $\mathbb{E}[Q|\mathcal{D}_0\in V_1^-\cup V_1^+]$ decreases monotonously and continuously (since $\mathcal{D}_0$ is a continuous R.V.), and for $r=0$ it is the expectation over $V_1^+$ which is larger than $\mathbb{E}[Q]$ by construction, while for $r$ sufficiently large we get $\mathbb{E}[Q|\mathcal{D}_0\in V^-\cup V_1^+]$ which is smaller than $\mathbb{E}[Q]$ because we are only removing elements of $V_2^+$ which are all larger than $\mathbb{E}[Q]$. Hence, by intermediate value theorem, there is indeed an $r$ such that \Cref{eq:lemma:1} is satisfied.

We now consider $V_1:=V_1^-\cup V_1^+$ and $V_2:=V_2^-\cup V_2^+$. 

Let
$$S(Q|V_{1,2}):=\mathbb{E}_{\mathcal{D}_0}\left[\sum_{i=k}^{n} \binom{n}{i}Q(\mathcal{D}_0)^i(1-Q(\mathcal{D}_0))^{n-i}\bigg|\mathcal{D}_0\in V_{1,2}\right]$$
Let us assume for instance that $S(Q|V_1)\geq S(Q|V_2)$.

We now want to define a projection $T$ from $V_1$ to $V_2$ preserving relative probabilities. For convenience, let $\leq$ be an order on $dom(\mathcal{D}_0)$, such as the dictionary order, $F_{V_1}$ the cumulative distribution function on $V_1$ for that order, and $F^{-1}_{V_2}$ the quantile function on $V_2$. We define $T(\mathcal{D}_0):=F^{-1}_{V_2}F_{V_1}(\mathcal{D}_0)$.
We then have
\begin{align*}
    \mathbb{P}(Q(T(\mathcal{D}_0))=q|\mathcal{D}_0\in V_1) &= \mathbb{P}(Q(F_{V_2}^{-1}F_{V_1}(\mathcal{D}_0))=q|\mathcal{D}_0\in V_1)\\
    &= \mathbb{P}_{s\sim Unif[0,1]}(Q(F_{V_2}^{-1}(s)=q) \\
    &= \mathbb{P}(Q(\mathcal{D}_0)=q|\mathcal{D}_0\in V_2)
\end{align*}
And thus \begin{align*} 
\mathbb{E}[Q(T(\mathcal{D}_0))|\mathcal{D}_0\in V_1] &= \int_{q}q\mathbb{P}(Q(T(\mathcal{D}_0))=q|\mathcal{D}_0\in V_1) \\
&= \int_{q}q\mathbb{P}(Q(\mathcal{D}_0)=q|\mathcal{D}_0\in V_2) \\
&= \mathbb{E}[Q(\mathcal{D}_0)|\mathcal{D}_0\in V_2] \\
&= \mathbb{E}[Q].
\end{align*}

Let us finally consider $\pi(Q)$ defined by $\forall \mathcal{D}_0\in V_2, \pi(Q)(\mathcal{D}_0)=Q(\mathcal{D}_0)$ and $\forall \mathcal{D}_0\in V_1, \pi(Q)(\mathcal{D}_0)=Q(T(\mathcal{D}_0))$.

We have 
\begin{align*}
\mathbb{E}[\pi(Q)]&=\mathbb{P}(\mathcal{D}_0\in V_1)\mathbb{E}[\pi(Q)|\mathcal{D}_0\in V_1]+\mathbb{P}(\mathcal{D}_0\in V_2)\mathbb{E}[\pi(Q)|\mathcal{D}_0\in V_2]\\
&=\mathbb{P}(\mathcal{D}_0\in V_1)\mathbb{E}[Q(T(\mathcal{D}_0))|\mathcal{D}_0\in V_1]+\mathbb{P}(\mathcal{D}_0\in V_2)\mathbb{E}[Q|\mathcal{D}_0\in V_2] \\
&= (\mathbb{P}(\mathcal{D}_0\in V_1) + \mathbb{P}(\mathcal{D}_0\in V_2))\mathbb{E}[Q]\\
&= \mathbb{E}[Q].
\end{align*}

Similarly
\begin{align*}
    S(\pi(Q))&=\mathbb{P}(\mathcal{D}_0\in V_1)S(\pi(Q)|V_1)+\mathbb{P}(\mathcal{D}_0\in V_2)S(\pi(Q)|V_2) \\
    &= \mathbb{P}(\mathcal{D}_0\in V_1)S(Q|V_2)+\mathbb{P}(\mathcal{D}_0\in V_2)S(Q|V_2) \\
    &= S(Q|V_2) \\
    &\leq S(Q).
\end{align*}

Finally, $\pi(Q)$ only take values in $V_2\subseteq V^-\cup V_2^+$, thus, $d(\pi(Q))\leq d^-(Q)+\frac 1 2 d^+(Q)\leq 0.75 d(Q)$, as the diameter of $V_2^+$ is half that of $V^+$ by construction. If $S(Q|V_2)\geq S(Q|V_1)$, we can follow the same process except that we project $V_2$ into $V_1$. If $d^-(Q)\geq d^+(Q)$, we can follow the same process except that we divide $V^-$ around its center instead of $V^+$.

By repeating the process we find a sequence of $Q_i:=\pi^i(Q)$ for which $S(Q_i)$ is at most $S(Q)$, the expectation over $dom(\mathcal{D}_0)$ is $\mathbb{E}[Q]$, and both $Q_i(V^-(Q_i))$ and $Q_i(V^+(Q_i))$ shrink to singletons $\{v\}$ and $\{u\}$. 

Then, by introducing 
$m=\frac{1-\alpha'-v}{u-v}$,
$$Q'(\mathcal{D}_0)=Q_{u,v}:=v+(u-v)\mathbbm{1}_{\mathcal{D}_0\in F^{-1}_{dom(\mathcal{D}_0)}(m)},$$
where $F^{-1}_{dom(\mathcal{D}_0)}$ is the quantile function of $\mathcal{D}_0$ for the dictionary order, we get that $Q'$ is feasible (it has expectation exactly $1-\alpha'$) and $S(Q')\leq S(Q)$, with $Q'\in \{u,v\}^{dom(\mathcal{D}_0)}$. 

By introducing $u_\star, v_\star=\mathrm{argmin}_{u,v}(S(Q_{u,v}))$ and 
$$Q_\star = Q_{u_\star,v_\star},$$
$Q_\star$ satisfies the constraint and for any feasible $Q_f$, $S(Q_f)\geq S(Q_f')=S(Q_{u(Q_f), v(Q_f)})\geq S(Q_\star)$.

This concludes the proof.
    
\end{proof}

\subsubsection{Analytical expression for the lower bound with split CP}
\label{appendix:analytical_form}

\Cref{prop:close_form_expected_bound} provides a closed form expression to compute the bound in the case of split CP.

Let us start with two intermediate lemmas that provide the expected value of the $\mathrm{Bin}(n, p)$ exceed $k$ when the probability $p$ follows a Beta distribution.

\begin{lemma}\label{lm:expectation_no_noise_constraint}
    $$
    \mathbb{E}_{p \sim \mathrm{Beta}(r, s)}\bigg[ F_k(p) \bigg] = \frac{1}{B(r,s)} \sum_{i=k}^n\binom{n}{i}B(r+i,s+n-i)
    $$
\end{lemma}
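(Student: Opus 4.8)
The plan is to compute the expectation directly by unfolding the definition of $F_k$ and integrating term by term against the Beta density. Recall that if $p\sim\mathrm{Beta}(r,s)$ then $p$ has density $\frac{1}{B(r,s)}p^{r-1}(1-p)^{s-1}$ on $[0,1]$, where $B(a,b)=\int_0^1 t^{a-1}(1-t)^{b-1}\,dt$ is the Beta function. Substituting the definition $F_k(p)=\sum_{i=k}^n\binom{n}{i}p^i(1-p)^{n-i}$ gives
\begin{equation*}
\mathbb{E}_{p\sim\mathrm{Beta}(r,s)}\big[F_k(p)\big]=\frac{1}{B(r,s)}\int_0^1\left(\sum_{i=k}^n\binom{n}{i}p^i(1-p)^{n-i}\right)p^{r-1}(1-p)^{s-1}\,dp.
\end{equation*}

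Next I would interchange the (finite) sum with the integral, which is unconditionally justified since it is a sum of finitely many integrable functions, obtaining
\begin{equation*}
\mathbb{E}_{p\sim\mathrm{Beta}(r,s)}\big[F_k(p)\big]=\frac{1}{B(r,s)}\sum_{i=k}^n\binom{n}{i}\int_0^1 p^{r+i-1}(1-p)^{s+n-i-1}\,dp.
\end{equation*}
Then I would recognize each remaining integral as the Beta function $\int_0^1 p^{(r+i)-1}(1-p)^{(s+n-i)-1}\,dp=B(r+i,s+n-i)$, which is finite and well-defined because $r+i>0$ and $s+n-i>0$ for all $i\in\{k,\dots,n\}$. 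Substituting this identity yields exactly $\frac{1}{B(r,s)}\sum_{i=k}^n\binom{n}{i}B(r+i,s+n-i)$, as claimed.

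There is no real obstacle here; the statement is a direct consequence of linearity of expectation and the integral representation of the Beta function. The only points worth stating explicitly for rigor are that the sum is finite (so the interchange with the integral needs no dominated-convergence argument) and that the shifted parameters $r+i$ and $s+n-i$ remain strictly positive, so each $B(r+i,s+n-i)$ is genuinely a convergent Beta integral.
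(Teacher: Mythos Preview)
Your proof is correct and follows exactly the same route as the paper: expand $F_k(p)$, integrate against the $\mathrm{Beta}(r,s)$ density, swap the finite sum with the integral, and identify each term as $B(r+i,s+n-i)$. The only difference is that you make the justification for the interchange and the positivity of the Beta parameters explicit, which the paper leaves implicit.
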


\begin{proof}
\begin{align*}
\mathbb{E}_{p \sim \mathrm{Beta}(r,s)}\bigg[ F_k(p) \bigg] 
&= \int_{0}^1  F_k(p) \frac{p^{r-1}(1-p)^{s-1}}{B(r,s)} dp\\
        &= \frac{1}{B(r,s)}\int_{0}^1 p^{r-1}(1-p)^{s-1}\sum_{i=k}^n\binom{n}{i}p^i(1-p)^{n-i}d p\\
        &= \frac{1}{B(r,s)} \sum_{i=k}^n\binom{n}{i}\int_{0}^1 p^{r-1 +i}(1-p)^{s-1+n-i} dp
\end{align*}

Hence the result, since $B(r,s) = \int_{0}^1 p^{r-1}(1-p)^{s-1} dp$.
\end{proof}

\begin{lemma}\label{lm:expectation_noise_constraint}
\begin{align*}
\mathbb{E}_{p \sim \mathrm{Beta}(r,s)}\bigg[ F_k(p') \mathbbm{1}_{p' \in (0, 1)} \bigg] &= \frac{1}{B(r,s) b^n} \sum_{i=k}^n\binom{n}{i}\sum_{j=0}^i\binom{i}{j}(b-1)^{i-j}  B_{ij} \\
\text{ where } B_{ij} &= \bigg[B(r+j,s+n-i)-B_{\rm{inc}}(1-b,r+j,s+n-i)\bigg]
\end{align*}

\begin{proof}
We denote $p' = 1 - \frac{1 -p}{b}$. As such $p' \in (0, 1)$ if and only if $p \in (1, 1-b)$ and we have
\begin{align*}
\mathbb{E}_{p \sim \mathrm{Beta}(r,s)}\bigg[ F_k(p') \mathbbm{1}_{p' \in (0, 1)} \bigg]  &= \frac{1}{B(r,s)}\int_{1-b}^1 p^{r-1}(1-p)^{s-1}\sum_{i=k}^n\binom{n}{i}\left(1-\frac{1-p}{b}\right)^i\left(\frac{1-p}{b}\right)^{n-i}d p\\
    &= \frac{1}{B(r,s)b^n} \sum_{i=k}^n\binom{n}{i}\sum_{j=0}^i\binom{i}{j}(b-1)^{i-j}\int_{1-b}^1 p^{r-1+j}(1-p)^{n-i+s-1}d p\\
    &= \frac{1}{B(r,s) b^n} \sum_{i=k}^n\binom{n}{i}\sum_{j=0}^i\binom{i}{j}(b-1)^{i-j} B_{i, j}
\end{align*}
where in the second equality, we use $\displaystyle (b -1+p)^i=\sum_{j=0}^{i}\binom{i}{j}p^j(b-1)^{i-j}$
and denoted
$$
B_{ij} = \int_{1-b}^1 p^{r-1+j}(1-p)^{n-i+s-1}d p =  \bigg[B(r+j,s+n-i)-B_{\rm{inc}}(1-b,r+j,s+n-i)\bigg] .
$$
\end{proof}
\end{lemma}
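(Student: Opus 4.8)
The plan is a direct computation. I would unfold the expectation as a single integral against the $\mathrm{Beta}(r,s)$ density, resolve the indicator by adjusting the integration limits, expand $F_k$ together with the binomial power that appears, and finally recognise the remaining one-dimensional integrals as incomplete Beta integrals. Concretely, starting from $\mathbb{E}_{p\sim\mathrm{Beta}(r,s)}[F_k(p')\mathbbm{1}_{p'\in(0,1)}] = \frac{1}{B(r,s)}\int_0^1 p^{r-1}(1-p)^{s-1} F_k(p')\mathbbm{1}_{p'\in(0,1)}\,dp$ with $p' = 1-\frac{1-p}{b}$, the indicator is handled by noting $p'>0 \iff p>1-b$ and $p'<1 \iff p<1$, so $p'\in(0,1)$ is equivalent to $p\in(1-b,1)$; this merely raises the lower integration limit to $1-b$ (the interval written in passing as $(1,1-b)$ should read $(1-b,1)$).

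Next I would substitute $F_k(q)=\sum_{i=k}^n\binom{n}{i}q^i(1-q)^{n-i}$ at $q=p'$ and re-express the factors in terms of $p$. Using $1-p'=\frac{1-p}{b}$ and $p'=\frac{b-1+p}{b}$, one obtains $(p')^i(1-p')^{n-i}=\frac{(b-1+p)^i(1-p)^{n-i}}{b^n}$, where the power of $b$ conveniently collapses to a single $b^{-n}$ independent of $i$. Then expanding $(b-1+p)^i=\sum_{j=0}^i\binom{i}{j}p^j(b-1)^{i-j}$ by the binomial theorem and interchanging the (finite) sums with the integral yields $\frac{1}{B(r,s)b^n}\sum_{i=k}^n\binom{n}{i}\sum_{j=0}^i\binom{i}{j}(b-1)^{i-j}\int_{1-b}^1 p^{r+j-1}(1-p)^{s+n-i-1}\,dp$.

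Finally I would identify the inner integral with $B_{ij}$: writing $\int_{1-b}^1 p^{a-1}(1-p)^{c-1}\,dp = \int_0^1 p^{a-1}(1-p)^{c-1}\,dp - \int_0^{1-b} p^{a-1}(1-p)^{c-1}\,dp$ with $a=r+j$ and $c=s+n-i$, the first term is $B(r+j,s+n-i)$ and the second is the non-normalised incomplete Beta function $B_{\mathrm{inc}}(1-b,r+j,s+n-i)$, which together give exactly $B_{ij}$; substituting back produces the claimed identity. Since every step is elementary, there is no genuine obstacle here — the only care needed is bookkeeping: getting the integration limit right after removing the indicator, checking that the powers of $b$ combine into the single prefactor $b^{-n}$, and keeping the two binomial-expansion indices $i,j$ straight; the sum–integral interchange is trivially valid because both sums are finite.
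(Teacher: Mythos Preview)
Your proposal is correct and follows essentially the same approach as the paper's own proof: write the expectation as an integral against the Beta density, restrict to $p\in(1-b,1)$ via the indicator, expand $F_k(p')$ and then $(b-1+p)^i$ by the binomial theorem, and recognise the remaining integral as $B(r+j,s+n-i)-B_{\mathrm{inc}}(1-b,r+j,s+n-i)$. You even caught the typo in the interval $(1,1-b)$.
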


\begin{restatable}{proposition}{weaksplit}
\label{prop:close_form_expected_bound}
Let $B$ be the Beta function and $B_{\rm{inc}}$ the incomplete version
$$B_{\rm{inc}}(z,r,s)=\int_{0}^z x^{r-1}(1-x)^{s-1}dx. $$

Without \Cref{assumption:strong}, we have
\begin{align*}
H(k) &= \frac{1}{B(i_\alpha,j_\alpha)b^n} \sum_{i=k}^n\binom{n}{i}\sum_{j=0}^i\binom{i}{j}(b-1)^{i-j}B_{ij}\\
B_{ij} &= B(i_\alpha+j,n-i+j_\alpha) 
- B_{\rm{inc}}(1-b, i_\alpha+j,n-i+j_\alpha),
\end{align*}
and with \Cref{assumption:strong},
\begin{align*}
H(k) &= \frac{1}{B(i_\alpha,j_\alpha)}\sum_{i=k}^n\binom{n}{i}
B(i_\alpha+i,n-i+j_\alpha)
\end{align*}
\end{restatable}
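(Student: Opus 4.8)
The plan is to prove \Cref{prop:close_form_expected_bound} as a direct corollary of the two preceding lemmas, \Cref{lm:expectation_no_noise_constraint} and \Cref{lm:expectation_noise_constraint}, by substituting the Beta parameters coming from the split-CP conditional coverage distribution \Cref{eq:split-dep}. Recall that \Cref{prop:split_lower_bound} defines $H(k) = \mathbb{E}_{Q\sim\mathrm{Beta}(i_\alpha,j_\alpha)}[F_k(Q')]$, where $Q'=Q$ under \Cref{assumption:strong} and $Q' = 1-\frac{1-Q}{b}$ otherwise, and where $i_\alpha = \lceil(1-\alpha)(n_{\rm cal}+1)\rceil$, $j_\alpha = n_{\rm cal}+1-i_\alpha$. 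So the whole task reduces to specializing $r=i_\alpha$, $s=j_\alpha$ in the two lemmas.

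First I would handle the case \emph{with} \Cref{assumption:strong}. Here $Q'=Q\sim\mathrm{Beta}(i_\alpha,j_\alpha)$, so $H(k) = \mathbb{E}_{p\sim\mathrm{Beta}(i_\alpha,j_\alpha)}[F_k(p)]$, and \Cref{lm:expectation_no_noise_constraint} with $r=i_\alpha$, $s=j_\alpha$ gives immediately
$$
H(k) = \frac{1}{B(i_\alpha,j_\alpha)}\sum_{i=k}^n \binom{n}{i} B(i_\alpha+i, j_\alpha+n-i),
$$
which is the claimed closed form (matching the statement up to the harmless reindexing $s+n-i = j_\alpha+n-i = n-i+j_\alpha$). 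Next, for the case \emph{without} \Cref{assumption:strong}, I note that $Q'=1-\frac{1-Q}{b}$ is exactly the transformation $p\mapsto p'$ appearing in \Cref{lm:expectation_noise_constraint}, and that since $F_k$ is only meaningfully defined (as a probability) for arguments in $[0,1]$ — and more to the point $F_k(p')$ with $p'<0$ would not correspond to a valid binomial tail — we interpret $F_k(Q')$ as $0$ when $Q'\le 0$, i.e. we really mean $\mathbb{E}[F_k(Q')\mathbbm{1}_{Q'\in(0,1)}]$; note $Q'\le 1$ always since $Q\le 1$. (I would add one sentence justifying this: when $f_{\theta_\star}(X)$ has conditional coverage $Q$ for $Y$, \Cref{prop:noise-free-cov}'s argument gives conditional coverage at least $1-\frac{1-Q}{b} = Q'$ for the noise-free output, and a coverage probability is automatically nonnegative, so clipping $Q'$ at $0$ only strengthens the bound.) Then \Cref{lm:expectation_noise_constraint} with $r=i_\alpha$, $s=j_\alpha$ gives
$$
H(k) = \frac{1}{B(i_\alpha,j_\alpha)\,b^n}\sum_{i=k}^n\binom{n}{i}\sum_{j=0}^i\binom{i}{j}(b-1)^{i-j} B_{ij},
$$
with $B_{ij} = B(i_\alpha+j, j_\alpha+n-i) - B_{\rm inc}(1-b, i_\alpha+j, j_\alpha+n-i)$, which is exactly the stated expression.

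Since both halves are immediate substitutions into already-proven lemmas, there is no genuine obstacle here; the only care needed is bookkeeping. The one point worth being explicit about is the clipping of $Q'$ to $(0,1)$: the expression $F_k(Q')$ in \Cref{prop:split_lower_bound}'s statement of $H(k)$ must be read as $F_k(Q')\mathbbm{1}_{Q'\in(0,1)}$ (equivalently $F_k(\max(Q',0))$, using $F_k(0)=0$ for $k\ge 1$), so that the lower bound on $\mathbb{E}_{\mathcal{D}_0}[F_k(Q_\star(\mathcal{D}_0))]$ remains valid — this is legitimate precisely because $Q_\star(\mathcal{D}_0)\in[0,1]$ always and $F_k$ is monotone, so replacing a possibly-negative surrogate by $0$ can only decrease the bound. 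With that convention in place, the proof is just: apply \Cref{lm:expectation_no_noise_constraint} (strong-assumption case) and \Cref{lm:expectation_noise_constraint} (general case) with $(r,s) = (i_\alpha, j_\alpha)$, and collect terms. Finally I would remark that all quantities — the ordinary and incomplete Beta functions, finite sums of binomial coefficients — are elementary and computable in closed form, which is the practical payoff claimed after \Cref{prop:split_lower_bound}.
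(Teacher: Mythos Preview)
Your proposal is correct and follows essentially the same route as the paper: both cases are obtained by specializing the two preceding lemmas with $(r,s)=(i_\alpha,j_\alpha)$, and the paper likewise inserts the indicator $\mathbbm{1}_{Q'\in(0,1)}$ before invoking \Cref{lm:expectation_noise_constraint}. Your extra sentence justifying the clipping of $Q'$ at $0$ is a welcome clarification that the paper leaves implicit.
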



\begin{proof}

    We remind the notation $Q' = 1 - \frac{1 - Q}{b}$. Then
    from \Cref{eq:iterated_proba}
    \begin{align*}
        \mathbb{P}_{\mathcal{D}_0,\mathcal{D}_1}(\theta_\star\in\Theta_k) = \mathbb{E}_{\mathcal{D}_0}\bigg[F_k(Q_{\star}(\mathcal{D}_0)\bigg] \geq
        \mathbb{E}_{Q \sim \mathrm{Beta}(i_\alpha, j_\alpha)}\bigg[ F_k(Q') \mathbbm{1}_{Q' \in (0, 1)} \bigg]
    \end{align*}
    and we conlude using \Cref{lm:expectation_noise_constraint}

    If we additionally assume \Cref{assumption:strong}, then from \Cref{eq:iterated_proba}
    \begin{align*}
        \mathbb{P}_{\mathcal{D}_0,\mathcal{D}_1}(\theta_\star\in\Theta_k)  = \mathbb{E}_{\mathcal{D}_0}\bigg[F_k(Q_{\star}(\mathcal{D}_0)\bigg] \geq \mathbb{E}_{Q \sim \mathrm{Beta}(i_\alpha, j_\alpha)}\bigg[ F_k(Q) \bigg]
    \end{align*}
    and we conclude using \Cref{lm:expectation_no_noise_constraint}.
\end{proof}

\subsection{High Probability Lower Bounds}
\label{appendix:pac}

Valid coverage does not always holds conditional on the observed data, so we might settle for a high probability guarantee rather than an expected one. Notably, the lower bound in \Cref{prop:noise-free-cov} also holds conditionally on $\mathcal{D}_{0}$.
Thus, $\forall \alpha, \delta \in (0, 1)$, a Probably Approximately Correct guarantees on the coverage of the noisy outputs \citep{vovk2012conditional} translate into the noiseless output as well.

We illustrate in \Cref{fig:pac_deltas}, how the coverage changes as a function of $\delta$.

\begin{figure}[ht]
  \centering
  \includegraphics[width=0.8\textwidth]{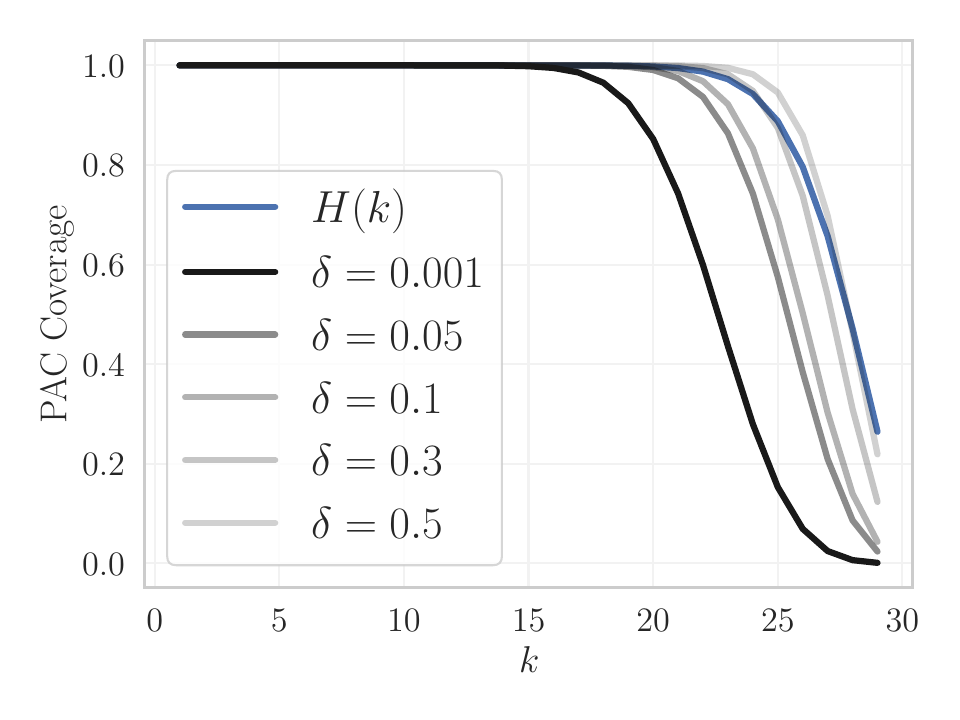}
  \caption{Guaranteed coverage with the PAC bounds when the tolerance level $\delta$ varies. We fix $n=30$ and $n_{\mathrm{cal}} = 50$. The function $H(k)$ corresponds to the lower bound on the expected coverage.}
  \label{fig:pac_deltas}
\end{figure}

\begin{lemma} \label{lm:conditional_cov}
Under the model in \Cref{eq:model}, It holds
$$
\mathbb{P}_{X, Y}\left(f_{\theta_\star}(X) \in \Gamma(X) \mid \mathcal{D}_0\right) \geq
1 - \frac{\mathbb{P}_{X, Y}(Y \notin \Gamma(X) \mid \mathcal{D}_0)}{b} 
$$
\end{lemma}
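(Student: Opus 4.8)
The plan is to mimic the conditional version of the argument used to prove \Cref{prop:noise-free-cov}, working throughout conditionally on $\mathcal{D}_0$. First I would fix $\mathcal{D}_0$ and write $\Gamma(X) = [A(X), B(X)]$, and set the event $E := \{Y \in \Gamma(X)\}$. As in the unconditional proof, I would decompose $\mathbb{P}_{X,Y}(E \mid \mathcal{D}_0)$ over the three mutually exclusive events according to whether $f_{\theta_\star}(X)$ lies below $A(X)$, inside $[A(X), B(X)]$, or above $B(X)$:
\begin{align*}
\mathbb{P}(E \mid \mathcal{D}_0) &= \mathbb{P}\big(E \mid f_{\theta_\star}(X) \in \Gamma(X), \mathcal{D}_0\big)\,\mathbb{P}\big(f_{\theta_\star}(X) \in \Gamma(X) \mid \mathcal{D}_0\big) \\
&\quad + \mathbb{P}\big(E \mid f_{\theta_\star}(X) < A(X), \mathcal{D}_0\big)\,\mathbb{P}\big(f_{\theta_\star}(X) < A(X) \mid \mathcal{D}_0\big) \\
&\quad + \mathbb{P}\big(E \mid f_{\theta_\star}(X) > B(X), \mathcal{D}_0\big)\,\mathbb{P}\big(f_{\theta_\star}(X) > B(X) \mid \mathcal{D}_0\big).
\end{align*}

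Next I would bound the two "outside" conditional probabilities exactly as in the proof of \Cref{prop:noise-free-cov}: if $f_{\theta_\star}(X) < A(X)$ then $Y = f_{\theta_\star}(X) + \xi \in \Gamma(X)$ forces $\xi > 0$, so $\mathbb{P}(E \mid f_{\theta_\star}(X) < A(X), \mathcal{D}_0) \leq \mathbb{P}(\xi > 0 \mid f_{\theta_\star}(X) < A(X), \mathcal{D}_0) \leq 1 - b$, where the last inequality uses \Cref{assum:noise}, namely that $\mathbb{P}(\xi \leq 0 \mid X = x) \geq b$ pointwise (and this is unaffected by further conditioning on $\mathcal{D}_0$, since $\mathcal{D}_0$ is independent of the fresh noise $\xi$, or more simply because the bound holds conditionally on $X=x$ for every $x$). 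The symmetric argument gives the same bound on the third term. Upper bounding the first conditional probability by $1$ and writing $q := \mathbb{P}(f_{\theta_\star}(X) \in \Gamma(X) \mid \mathcal{D}_0)$, the two outside probabilities sum to $1 - q$, so $\mathbb{P}(E \mid \mathcal{D}_0) \leq q + (1-b)(1-q) = b\,q + 1 - b$. Rearranging yields $q \geq \frac{\mathbb{P}(E \mid \mathcal{D}_0) - (1-b)}{b} = 1 - \frac{1 - \mathbb{P}(E \mid \mathcal{D}_0)}{b} = 1 - \frac{\mathbb{P}(Y \notin \Gamma(X) \mid \mathcal{D}_0)}{b}$, which is exactly the claimed inequality.

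This lemma is essentially just the conditional restatement of \Cref{prop:noise-free-cov}, so there is no real obstacle; the only point requiring a sentence of care is justifying that the pointwise noise bound from \Cref{assum:noise} survives conditioning on $\mathcal{D}_0$. The cleanest way is to note that every step of the \Cref{prop:noise-free-cov} proof conditions on the value $X = x$ (where the noise bound holds by definition of $b$) before integrating over $x$; doing the same integration over the conditional law of $X$ given $\mathcal{D}_0$ rather than the marginal law of $X$ gives the conditional statement verbatim. I would therefore present the proof as "the argument of \Cref{prop:noise-free-cov} applies mutatis mutandis with all probabilities taken conditionally on $\mathcal{D}_0$," and then spell out the one-line rearrangement that produces the stated form of the bound.
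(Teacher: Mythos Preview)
Your proposal is correct and follows essentially the same approach as the paper: the paper's own proof simply states that the argument of \Cref{prop:noise-free-cov} applies verbatim with the event $E$ replaced by its version conditional on $\mathcal{D}_0$. Your write-up is in fact more detailed than the paper's, including the explicit justification that the pointwise noise bound from \Cref{assum:noise} survives conditioning on $\mathcal{D}_0$.
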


\begin{proof}
We use same steps than the proof of \Cref{prop:noise-free-cov}.

In this case, we replace the events $E = \{Y \in \Gamma(X)\}$ by $E = \{Y \in \Gamma(X) \mid \mathcal{D}_0\}$.
\end{proof}

\begin{restatable}{proposition}{paccoverage}
\label{prop:paccoverage}
For any $\alpha, \delta \in (0, 1)$,
let us define $\tilde \alpha(\delta) = \alpha + \sqrt{ \frac{\log(1/\delta) }{n_{\rm cal}}}$. Then, it holds
$$
\mathbb{P}_{\mathcal{D}_0} \bigg( Q_\star(\mathcal{D}_0) \geq 1 - \frac{\tilde \alpha(\delta)}{b} \bigg) \geq 1 - \delta,
$$
Furthermore, we also have
%
$$
\mathbb{P}_{\mathcal{D}_0} \bigg( \mathbb{P}_{\mathcal{D}}( \theta_\star \in \Theta_{k_{\rm{PAC}}} \mid \mathcal{D}_0 ) \geq 1 - \beta \bigg) \geq 1 - \delta
$$
where
$$
 k_{\rm{PAC}} = \max\left\{k: F_k\left(1 - \frac{\tilde \alpha(\delta)}{b}\right) \geq 1 - \beta \right\}.
$$
\end{restatable}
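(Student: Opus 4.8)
\textbf{Proof plan for Proposition~\ref{prop:paccoverage}.}

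The plan is to prove the two displayed statements in sequence, the second being a near-immediate consequence of the first. For the first statement, I would start from the known PAC-type guarantee for split conformal prediction on the noisy outputs: with $Q(\mathcal{D}_0) = \mathbb{P}_{X,Y}(Y \in \Gamma(X) \mid \mathcal{D}_0) \sim \mathrm{Beta}(i_\alpha, j_\alpha)$ as recalled in \Cref{eq:split-dep}, one has a concentration bound of the form $\mathbb{P}_{\mathcal{D}_0}\big(Q(\mathcal{D}_0) \geq 1 - \tilde\alpha(\delta)\big) \geq 1 - \delta$ with $\tilde\alpha(\delta) = \alpha + \sqrt{\log(1/\delta)/n_{\rm cal}}$; this is the standard ``conditional coverage lapse'' estimate of \citet{vovk2012conditional}, obtainable e.g. via a Dvoretzky--Kiefer--Wolfowitz or Hoeffding argument on the empirical quantile of the calibration scores. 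Conditioning on the event $\{Q(\mathcal{D}_0) \geq 1 - \tilde\alpha(\delta)\}$, I would then invoke \Cref{lm:conditional_cov}, which states $Q_\star(\mathcal{D}_0) \geq 1 - \frac{1 - Q(\mathcal{D}_0)}{b} = 1 - \frac{\mathbb{P}_{X,Y}(Y \notin \Gamma(X)\mid \mathcal{D}_0)}{b}$. On that event $1 - Q(\mathcal{D}_0) \leq \tilde\alpha(\delta)$, so $Q_\star(\mathcal{D}_0) \geq 1 - \tilde\alpha(\delta)/b$, which gives the first display since the event has probability at least $1 - \delta$.

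For the second statement, I would condition on the same high-probability event $\mathcal{E} = \{Q_\star(\mathcal{D}_0) \geq 1 - \tilde\alpha(\delta)/b\}$, which we have just shown satisfies $\mathbb{P}_{\mathcal{D}_0}(\mathcal{E}) \geq 1 - \delta$. On $\mathcal{E}$, since the $X_i \in \mathcal{D}$ are independent (as assumed throughout the split-CP discussion, the events $f_{\theta_\star}(X_i) \in \Gamma(X_i)$ are conditionally i.i.d.\ given $\mathcal{D}_0$ with success probability $Q_\star(\mathcal{D}_0)$), so $\sum_{i=1}^n \phi_i(\theta_\star) \mid \mathcal{D}_0 \sim \mathrm{Binomial}(n, Q_\star(\mathcal{D}_0))$ and therefore
\[
\mathbb{P}_{\mathcal{D}}(\theta_\star \in \Theta_k \mid \mathcal{D}_0) = F_k(Q_\star(\mathcal{D}_0)) \geq F_k\!\left(1 - \tfrac{\tilde\alpha(\delta)}{b}\right),
\]
using that $F_k$ is nondecreasing in $p$. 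By the definition of $k_{\rm PAC}$ as the largest $k$ with $F_k(1 - \tilde\alpha(\delta)/b) \geq 1 - \beta$, the right-hand side is at least $1 - \beta$ for $k = k_{\rm PAC}$, so on $\mathcal{E}$ we have $\mathbb{P}_{\mathcal{D}}(\theta_\star \in \Theta_{k_{\rm PAC}} \mid \mathcal{D}_0) \geq 1 - \beta$; since $\mathbb{P}_{\mathcal{D}_0}(\mathcal{E}) \geq 1 - \delta$, this is exactly the claim.

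The main obstacle is the first step: making precise and citing (or reproving) the PAC bound $\mathbb{P}_{\mathcal{D}_0}(Q(\mathcal{D}_0) \geq 1 - \tilde\alpha(\delta)) \geq 1 - \delta$ with the stated $\tilde\alpha(\delta)$. The cleanest route is to note $Q(\mathcal{D}_0) \sim \mathrm{Beta}(i_\alpha, j_\alpha)$ and bound the lower tail of this Beta directly, or equivalently to use that $1 - Q(\mathcal{D}_0)$ is the probability mass the conformal set misses, controlled by how far the empirical $\alpha$-quantile of $n_{\rm cal}$ i.i.d.\ scores sits below the population $\alpha$-quantile — a one-sided DKW/Hoeffding bound gives a deviation of order $\sqrt{\log(1/\delta)/n_{\rm cal}}$, matching $\tilde\alpha(\delta) - \alpha$. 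Everything after that is monotonicity of $F_k$, the conditional binomial structure (already established in \Cref{eq:iterated_proba}), and a union-free conditioning argument, all of which are routine. I would also remark that \Cref{lm:conditional_cov} is precisely the conditional-on-$\mathcal{D}_0$ analogue of \Cref{prop:noise-free-cov} and requires no new argument beyond rerunning that proof with $E = \{Y \in \Gamma(X) \mid \mathcal{D}_0\}$, as its proof already indicates.
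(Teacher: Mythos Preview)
Your proposal is correct and follows essentially the same approach as the paper: first cite the PAC bound $\mathbb{P}_{\mathcal{D}_0}(Q(\mathcal{D}_0)\geq 1-\tilde\alpha(\delta))\geq 1-\delta$ from \citet{vovk2012conditional}, transfer it to $Q_\star$ via \Cref{lm:conditional_cov}, and then use the conditional binomial structure together with monotonicity of $F_k$ and the definition of $k_{\rm PAC}$ to conclude. The only differences are cosmetic --- you make the monotonicity of $F_k$ explicit and sketch how the PAC bound on $Q(\mathcal{D}_0)$ could be reproved, whereas the paper simply cites it.
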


\begin{proof}
From \Cref{lm:conditional_cov}, we have
$$
Q(\mathcal{D}_0) \geq 1-\tilde \alpha(\delta) \text{ implies }
Q_{\star}(\mathcal{D}_0) \geq 1 - \frac{\tilde \alpha(\delta)}{b}
$$
And from \cite{vovk2012conditional}, it holds
$$
\mathbb{P}_{\mathcal{D}_0}\big(Q(\mathcal{D}_0)  \geq 1 - \tilde \alpha(\delta) \big) \geq 1 - \delta,
$$
Which proves the first inequality.

Since the events $\{\theta_\star \in \Theta(X_i) \mid \mathcal{D}_0\}$ are independent and occurs with probability $Q_\star(\mathcal{D}_0)$, we have
\begin{align*} 
\mathbb{P}(\theta_\star \in \Theta_{k_{\rm PAC}} \mid \mathcal{D}_0) &=
\mathbb{P}\left( \sum_{i=1}^{n} \mathbbm{1}_{\theta \in \Theta(X_i)} \geq k_{\rm PAC} \mid \mathcal{D}_0 \right) \\
&= F_{k_{\rm PAC}}(Q_\star(\mathcal{D}_0)) \\
\end{align*}
And since $Q_\star(\mathcal{D}_0)\geq 1 - \frac{\tilde \alpha(\delta)}{b}$ with probability $1-\delta$, we have with probability at least $1-\delta$ over $\mathcal{D}_0$
\begin{align*}
    \mathbb{P}(\theta_\star \in \Theta_{k_{\rm PAC}} \mid \mathcal{D}_0)&\geq  F_{k_{\rm PAC}}\left(1 - \frac{\tilde \alpha(\delta)}{b}\right) \\
    &\geq 1-\beta,
\end{align*}
by construction of $k_{\rm PAC}$, which concludes the proof.
\end{proof}

\begin{remark}
From \Cref{eq:split-dep}, one also obtains the more accurate PAC lower bound with
$$ 
\tilde \alpha(\delta) = \sup\left\{\gamma : F_{\mathrm{Beta}(i_\alpha, j_\alpha)}\left(1 - \frac{\gamma}{b}\right) \leq \delta \right\}
$$
where $F_{\mathrm{Beta}}$ is the c.d.f. of $\mathrm{Beta}(i_\alpha, j_\alpha)$ distribution.
\end{remark}

\subsubsection{Conformal abstention}
\boundabstention*

\begin{proof}
By definition, we have
$y \text{ and } f_{\theta_\star}(X) \in \Gamma(X)$ if and only if $A(X) \leq Y \leq B(X)$ and $A(X) \leq f_{\theta_\star}(X) \leq B(X)$. Thus,
\begin{align*}
\mathbb{P}(|Y - f_{\theta_\star}(X)| \leq B(X) - A(X)) &\geq \mathbb{P}(Y \text{ and } f_{\theta_\star}(X) \in \Gamma(X)) \\
&\geq 1 - \mathbb{P}(Y \notin \Gamma(X)) - 
\mathbb{P}(f_{\theta_\star}(X) \notin \Gamma(X)) \\
&\geq 1 - \alpha - \frac{\alpha}{b} \\
&= 1 - \left(1 + \frac1b\right) \alpha.
\end{align*}
\end{proof}

\subsection{MILP representation}
\milprepresentation*

\begin{proof}
By definition,
$$\theta\in\Theta_k \Longleftrightarrow \sum_i \phi_i(\theta)\geq k \text{ for } \phi_i(\theta) \in \{0,1\}$$
$\text{for which } \phi_i(\theta)=1 \text{ implies } A(X_i)\leq\theta^\top X_i\leq B(X_i).$
To capture that such constraint should only be active when $\phi_i(\theta)=1$, we use the big M method \citep{wolsey2014integer,hillier2001introduction}, that sets a large constant $M$ and applying the constraint ($\phi_{i}^{c} = 1 - \phi_i$)
$$A(X_i)- \phi_{i}^{c}(\theta) M\leq \theta^\top X_i \leq B(X_i)+\phi_{i}^{c}(\theta) M.$$
One verifies that all the constraints for which $\phi_i(\theta)=0$ are inactive, and increase $M$ otherwise, to ensure the equivalence of the feasible set. 
\end{proof}

\section{Other}
\label{appendix:other}

\subsection{Counter-example to \Cref{assumption:strong}}
\label{appendix:counterexample}
While \Cref{assumption:strong} is very likely to hold, it cannot be theoretically guaranteed, due to adversarial counter-examples.

For instance, consider the following simple 1D counter-example: 
$$\theta^\star=0, \text{ and } \hat{\theta}=1, $$ 
$$\Gamma(X)=[\hat{\theta}X-0.99, \hat{\theta}X+0.99],$$ 

$$\mathbb{P}(\epsilon=0.02) = \mathbb{P}(\epsilon=-0.02)=0.5$$

and
$$\mathbb{P}(X=0)=0.9, \mathbb{P}(X=1)=0.1 $$

In this case, when $X=0$, $\Gamma(X)$ covers both the noisy and noise-free output with probability 1. When $X=1$, $\Gamma(X)=[0.01, 1.99]$ never contains the noise-free output $0$, but contains the noisy output with probability 0.5 (when $\epsilon=0.02$). As such, the coverage of $\Gamma(X)$ for the noisy output is 0.95, while its coverage for the noise free output is 0.9.

\end{document}